\documentclass{article}

\usepackage[preprint]{style/neurips_2026}

\usepackage[utf8]{inputenc}
\usepackage[T1]{fontenc}
\usepackage{hyperref}
\usepackage{url}
\usepackage{pifont}
\usepackage{booktabs}
\usepackage{amsfonts}
\usepackage{amsmath}
\usepackage{amssymb}
\usepackage{amsthm}
\usepackage{nicefrac}
\usepackage{microtype}
\usepackage{xcolor}
\usepackage{graphicx}
\usepackage{wrapfig}
\usepackage{multirow}
\usepackage{rotating} 
\usepackage{array}  
\usepackage{algorithm}
\usepackage{caption}
\usepackage{algpseudocode}
\usepackage{wasysym}
\usepackage{float}

\newtheorem{lemma}{Lemma}
\title{CM-EVS: Sparse Panoramic RGB-D-Pose Data \\for Complete Scene Coverage}

\author{%
  Jiale Liu$^{*1,4}$ \quad
  Jungang Li$^{*2,4}$ \quad
  Jieming Yu$^{*3,4}$ \quad
  Xinglin Yu$^{*5,4}$ \\
  Zihao Dongfang$^{2}$ \quad
  Zongjian Ding$^{4}$ \quad
  Kaifeng Ding$^{6,4}$ \quad
  Yi Yang$^{4}$ \\
  Lidong Chen$^{4}$ \quad
  Yang Zou$^{4}$ \quad
  Shunwen Bai$^{1}$ \quad
  Jiahuan Zhang$^{7}$ \\
  Haoran Huang$^{1}$ \quad
  Shan Huang$^{1}$ \\
  Yudong Gao$^{\dagger 3,4}$ \quad
  Mingjun Cheng$^{\dagger 1,4}$ \\[6pt]
  $^{1}$Zhejiang University \quad
  $^{2}$The Hong Kong University of Science and Technology (Guangzhou) \\
  $^{3}$The Hong Kong University of Science and Technology \quad
  $^{4}$Vorynel \\
  $^{5}$Xinjiang University \quad
  $^{6}$Wuhan Polytechnic University \\
  $^{7}$Tianjin University \\[4pt]
  $^{*}$Equal contribution. \quad $^{\dagger}$Corresponding authors. \\[2pt]
  \texttt{ygaodj@connect.ust.hk} \quad \texttt{mkellerc@outlook.com}
}

\hypersetup{
  pdftitle={CM-EVS: Sparse Panoramic RGB-D-Pose Data for Complete Scene Coverage},
  pdfauthor={Jiale Liu, Jungang Li, Jieming Yu, Xinglin Yu, Zihao Dongfang, Zongjian Ding, Kaifeng Ding, Yi Yang, Lidong Chen, Yang Zou, Shunwen Bai, Jiahuan Zhang, Haoran Huang, Shan Huang, Yudong Gao, Mingjun Cheng},
  pdfkeywords={panoramic RGB-D dataset, equirectangular projection, viewpoint selection, dataset curation, submodular optimization, novel view synthesis}
}

\begin{document}

\maketitle

\begin{abstract}
Modern 3D visual learning relies on observations sampled from metric 3D assets, yet existing scans, meshes, point clouds, simulations, and reconstructions do not directly provide a sparse, comparable, and geometry-consistent panoramic training interface. Dense trajectories duplicate nearby views, source-specific rendering policies yield heterogeneous annotations, and sparse heuristics may miss important regions or introduce depth-inconsistent observations. \emph{We study how to convert 3D assets into sparse panoramic RGB-D-pose data that preserves complete scene coverage with low redundancy and auditable provenance.} We propose \textbf{COVER} (\textbf{\underline{C}}overage-\textbf{\underline{O}}riented \textbf{\underline{V}}iewpoint curation with \textbf{\underline{E}}RP \textbf{\underline{R}}ange-depth warping), a training-free ERP viewpoint curator that projects geometry observed from selected views into candidate ERP probes, scores incremental coverage, and penalizes depth conflicts. Under bounded proxy error, its greedy coverage proxy preserves the standard coverage-style approximation behavior up to an additive error term. Using COVER, we build \textbf{CM-EVS} (\textbf{\underline{C}}overage-curated \textbf{\underline{M}}etric \textbf{\underline{E}}RP \textbf{\underline{V}}iew \textbf{\underline{S}}et), a panoramic RGB-D-pose dataset with 36{,}373 curated ERP frames from 1{,}275 indoor scenes across Blender indoor, HM3D, and ScanNet++, complemented by outdoor panoramas from TartanGround and OB3D re-encoded into the same schema. Each frame provides full-sphere RGB, metric range depth, calibrated pose; COVER-produced indoor frames include per-step provenance logs. With a median of only 25 frames per indoor scene, CM-EVS covers all 13 unified room types while maintaining compact scene-level coverage. Experiments show that COVER improves the coverage--conflict trade-off, making CM-EVS a sparse, compact, and auditable RGB-D-pose resource for geometry-consistent panoramic 3D learning.
\end{abstract}

\section{Introduction}
\label{sec:intro}

\begin{figure}[!t]
\centering
\includegraphics[width=\linewidth,height=6cm]{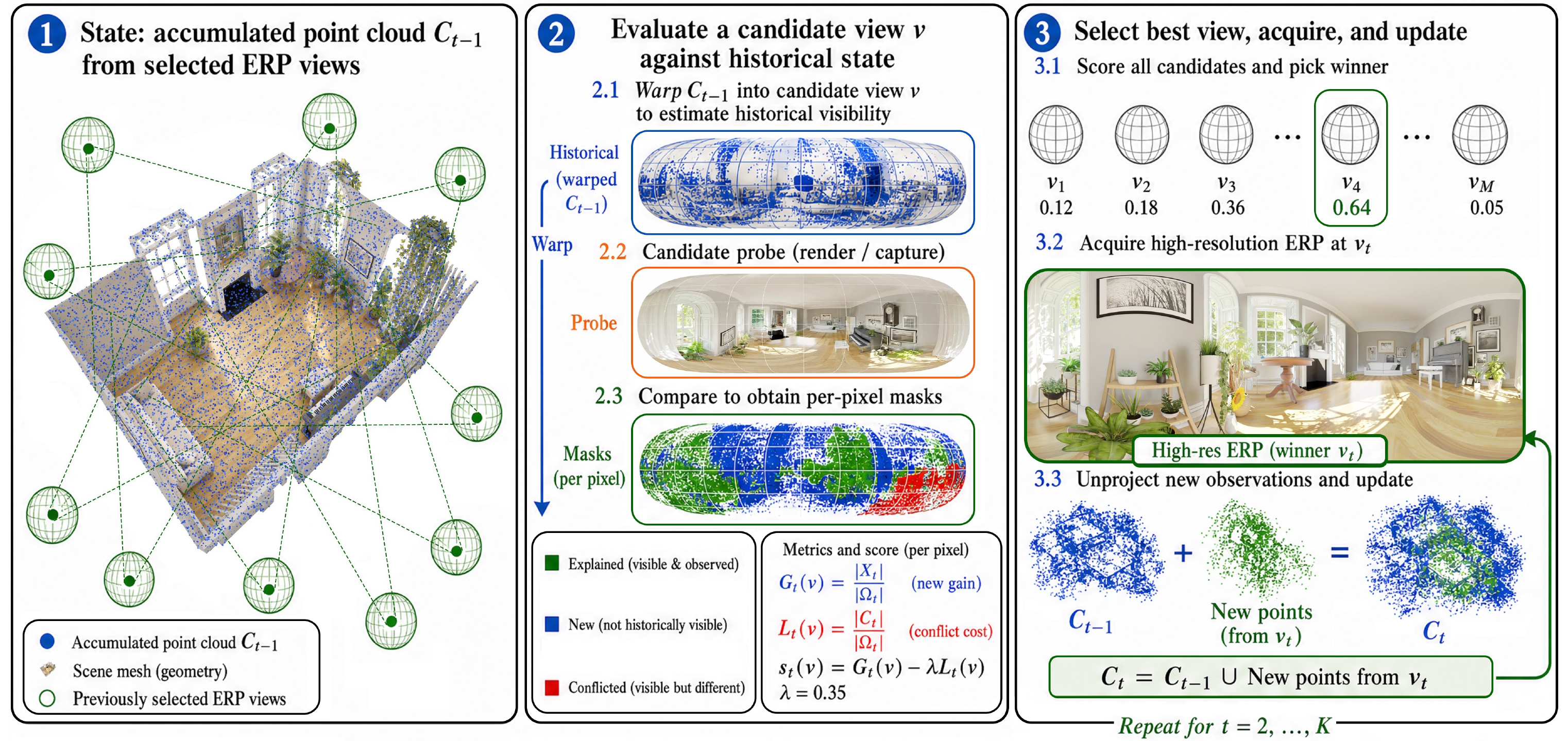}
\vspace{-4mm}
\caption{Overview of CM-EVS. An expanded illustration of the \textsc{COVER} pipeline (warping oracle, score, and point-cloud update) is in Appendix~\ref{app:overview_expanded} (Figure~\ref{fig:teaser_expanded}).}
\label{fig:teaser}
\vspace{-8mm}
\end{figure}

Modern 3D visual learning relies on observations sampled from metric 3D assets, including scans, meshes, point clouds, simulated environments, and reconstructed scenes. Among different observation formats, panoramic RGB-D-pose data offers a compact interface between scene-scale geometry and model training, as it converts scene-scale 3D structure into dense, view-centered supervision while preserving global spatial context: a single equirectangular projection (ERP) frame records a full $4\pi$ solid angle from one camera center, follows a shared spherical ray parameterization, and aligns appearance, metric range depth, and calibrated pose in a unified representation~\citep{zheng2025panorama}. This makes ERP observations useful for panoramic depth estimation~\citep{shen2022panoformer}, panoramic NeRF and Gaussian Splatting reconstruction~\citep{wang2024perf}, and 360$^\circ$ scene generation~\citep{wang2024360dvd}. However, 3D assets do not by themselves define an effective panoramic training interface. Models learn from sampled observations, and the sampling policy determines their coverage, redundancy, geometric consistency, and reproducibility.

This paper studies the \emph{observation layer} between metric 3D assets and panoramic model training: \textit{how to select and standardize panoramic RGB-D-pose views that are compact, geometrically informative, and auditable.} The challenge is not simply to render more ERP frames, but to expose non-redundant scene geometry while avoiding depth-inconsistent observations. Dense trajectories repeatedly sample nearby viewpoints, sparse heuristics may miss important regions, and source-specific rendering policies make datasets difficult to compare, since equal frame counts can encode very different geometric evidence. Existing resources reflect these limitations from different angles: captured or per-paper panoramas~\citep{albanis2021pano3d,bertel2020omniphotos} are often tied to fixed protocols or limited budgets; trajectory-based corpora such as 360DVD~\citep{wang2024360dvd} and Matrix-3D~\citep{zhang2025matrix3d} prioritize video continuity or generation rather than marginal coverage; and large 3D asset datasets such as Hypersim~\citep{roberts2021hypersim}, Structured3D~\citep{zheng2020structured3d}, HM3D, and ScanNet++~\citep{yeshwanth2023scannetpp} provide rich geometry but leave panoramic view generation to source-specific or downstream sampling choices. Moreover, candidate viewpoints, coverage gains, conflict statistics, and selection scores are rarely released as first-class artifacts, making panoramic observation sets hard to reproduce, diagnose, or extend.

We address this gap with \textbf{COVER} (\textbf{\underline{C}}overage-\textbf{\underline{O}}riented \textbf{\underline{V}}iewpoint curation with \textbf{\underline{E}}RP \textbf{\underline{R}}ange-depth warping), a training-free ERP viewpoint curator that formulates panoramic view selection as conflict-aware coverage maximization. Given a candidate ERP pool, COVER accumulates selected range-depth observations into a point cloud, projects the accumulated geometry into low-resolution probes of remaining candidates, and greedily selects views that reveal uncovered regions while penalizing range-depth conflicts with already observed geometry (Figure~\ref{fig:teaser}). This gives a compact, reproducible, training-free policy with a bounded-error analysis of the greedy coverage proxy.

We use COVER to build \textbf{CM-EVS} (\textbf{\underline{C}}overage-curated \textbf{\underline{M}}etric \textbf{\underline{E}}RP \textbf{\underline{V}}iew \textbf{\underline{S}}et), a provenance-tracked panoramic RGB-D-pose dataset for sparse yet complete scene coverage. Its curated indoor core contains 36{,}373 ERP frames from 1{,}275 scenes across Blender indoor, HM3D, and ScanNet++, complemented by schema-compatible outdoor panoramas re-encoded from TartanGround and OB3D. Each sample provides full-sphere RGB, metric range depth along ERP rays, and calibrated pose; COVER-produced frames further include candidate pools, coverage gains $G_t$, depth-conflict ratios $L_t$, and selection scores $s_t$. With a median of only $\sim$25 ERP frames per indoor scene, CM-EVS covers all 13 unified room types, and COVER improves the coverage--conflict trade-off over random, single-view-probe, coverage-only, and low-conflict-only baselines. CM-EVS thus offers a sparse, compact, and auditable panoramic RGB-D-pose resource for 3D learning.

Our contributions are summarized as follows.
\par\noindent\ding{182} \emph{We propose \textbf{COVER}, a conflict-aware ERP viewpoint curator.} COVER is a training-free greedy selector that uses coverage-oriented range-depth warping to choose high-coverage, low-conflict panoramic RGB-D-pose views, with a bounded-error analysis of its coverage proxy.
\par\noindent\ding{183} \emph{We introduce \textbf{CM-EVS}, a compact and provenance-tracked panoramic RGB-D-pose corpus.} CM-EVS contains a COVER-curated indoor core of 36{,}373 ERP frames from 1{,}275 scenes, complemented by a schema-compatible outdoor extension, with full-sphere RGB, metric range depth, calibrated pose, unified room labels, and per-frame provenance logs.
\par\noindent\ding{184} \emph{We evaluate auditable panoramic observation efficiency.} We release candidate pools, coverage gains, depth-conflict ratios, and selection scores, and show that COVER improves the coverage--conflict trade-off over random, single-view-probe, coverage-only, and low-conflict-only baselines.

By making panoramic data construction compact, geometry-aware, and reproducible, CM-EVS offers an auditable observation layer for evaluating and training geometry-consistent panoramic 3D models.
\section{Related Work}
\label{sec:rw}
\textbf{Panoramic Data for 3D Learning.}
Panoramic RGB-D-pose observations provide a compact interface for 3D perception, reconstruction, and generation, since a single ERP frame captures a full $4\pi$ field of view under a unified spherical parameterization. Existing 3D scene resources~\citep{chang2017matterport3d,yeshwanth2023scannetpp,ramakrishnan2021hm3d,roberts2021hypersim,zheng2020structured3d,patel2025tartanground,ito2025ob3d} provide rich geometry, annotations, or simulation environments, and panoramic datasets and reconstruction / generation methods~\citep{albanis2021pano3d,wang2024360dvd,zhang2025matrix3d,ou2026holo360d,wang2024perf,chen2023panogrf,zhou2024dreamscene360,tang2023mvdiffusion} further highlight the value of full-sphere observations. However, these resources typically inherit source-specific capture protocols, dense trajectories, or per-paper view-construction pipelines, so equal frame counts can encode substantially different geometric evidence and the camera policy behind a dataset is rarely released as a reproducible artifact. \emph{CM-EVS instead targets the data-supply layer: it converts heterogeneous 3D assets into sparse, calibrated, and comparable panoramic RGB-D-pose observations, making the observation policy behind panoramic 3D learning explicit and auditable.}

\textbf{View Selection for Data Curation.} 
View planning and next-best-view methods~\citep{vasquez2014volumetric,pan2022scvp,pan2022activenerf,ran2023neurar,chen2024gennbv} instead study online camera-pose selection for active reconstruction or exploration. \textsc{COVER} sits in a complementary regime: an offline, training-free, fixed-budget curator that builds panoramic training data from existing 3D assets by balancing incremental coverage with depth-conflict penalties. \emph{CM-EVS releases candidate pools, coverage gains, conflict ratios, selection scores, and provenance logs, following \emph{Datasheets for Datasets}~\citep{gebru2021datasheets} and Croissant~\citep{mlcommons2024croissant}, so users can reproduce the view policy, diagnose failure cases, or replace \textsc{COVER} with alternative strategies under the same candidate space.} Per-area discussion and additional citations are in Appendix~\ref{app:related-work}.

\section{Method}
\label{sec:method}

To select panoramic RGB-D-pose views that are compact, geometrically informative, and auditable, we propose \textsc{COVER}, a training-free ERP viewpoint curator that casts panoramic view selection as conflict-aware coverage maximization. We formalize fixed-budget viewpoint selection and define \textsc{COVER}'s conflict-aware warping oracle (\S\ref{sec:warping}), state the approximation guarantee and package the algorithm (\S\ref{sec:theory}), and describe the per-scene pipeline and per-source adapters (\S\ref{sec:pipeline}).

\subsection{Problem setup}
\label{sec:input}

Let $\mathcal{S}$ be a 3D scene (mesh, point cloud, or renderer-native asset) with a finite candidate set $\mathcal{P}\subset\mathbb{R}^3$ proposed by a source-specific adapter (\S\ref{sec:pipeline}). A geometric-validity predicate $\varphi(v,\mathcal{S})\!\in\!\{0,1\}$ rejects candidates embedded in geometry, flush against a wall, occluded by clutter, or otherwise physically implausible (Appendix~\ref{app:filter}); the feasible set is $\mathcal{P}_\varphi=\{v\in\mathcal{P}:\varphi(v,\mathcal{S})=1\}$. Discretize the observable surface of $\mathcal{S}$ into elements $\Omega(\mathcal{S})$ and let $O(v;\mathcal{S})\!\subseteq\!\Omega(\mathcal{S})$ be those observed from $v$. Given budget $K$, \textsc{COVER} solves the fixed-budget coverage problem
\begin{equation}
  \max_{\mathcal{V}\subseteq\mathcal{P}_\varphi,\;|\mathcal{V}|\le K}\;
  \Big|\bigcup_{v\in\mathcal{V}} O(v;\mathcal{S})\Big|,
\end{equation}
returning $\mathcal{V}$ together with per-frame ERP RGB, range depth, and pose. This is Max-$k$-Cover (NP-hard; no $(1-1/e+\epsilon)$-approximation unless $\mathrm{P}=\mathrm{NP}$ \citep{karp1972reducibility,feige1998threshold}); greedy with exact marginal gains achieves the $(1-1/e)$ bound \citep{nemhauser1978analysis}. \textsc{COVER} solves this greedily, with $\mathcal{V}_{t-1}$ the partial selection at step $t$ and $\mathcal{C}_{t-1}$ the point cloud unprojected from its range depth.

\subsection{Conflict-aware warping oracle}
\label{sec:warping}

\paragraph{Why warping.} An exact greedy oracle would render every $v\in\mathcal{P}_\varphi$ at full resolution per step ($10^2$--$10^3\!\times$ the cost of the final $K$ frames). \textsc{COVER} instead scores candidates with a cheap warping proxy and renders only the winner at full resolution. The resulting per-step proxy error $\epsilon_t$ is absorbed by an additive penalty in our coverage guarantee (Lemma~\ref{lem:noisy}, \S\ref{sec:theory}).

\paragraph{Oracle.} At step $t$, to score a candidate $v$ given the partial state $(\mathcal{V}_{t-1}, \mathcal{C}_{t-1})$, we run two cheap low-resolution passes: warping renders $\mathcal{C}_{t-1}$ into $v$'s ERP frame, marking pixels $H_v$ already explained by history (with predicted depth $D_v^{\text{hist}}$); probing renders $v$ itself, marking pixels $Q_v$ visible from $v$ (with probe depth $D_v^{\text{probe}}$). With depth tolerance $\delta\!=\!0.5\%$ of the AABB diagonal (clamped per source, Appendix~\ref{app:filter}), probe pixels split into agreed, new, and conflicting:
\begin{equation}
\begin{aligned}
E_v &= Q_v \cap H_v \cap \{|D_v^{\text{probe}} - D_v^{\text{hist}}| \le \delta\} && \text{(explained),}\\
N_v &= Q_v \setminus H_v && \text{(new),}\\
C_v &= Q_v \cap H_v \cap \{|D_v^{\text{probe}} - D_v^{\text{hist}}| > \delta\} && \text{(conflicted).}
\end{aligned}
\end{equation}
Normalizing by the total probe-pixel count $|\Omega_v|$ gives a coverage gain and a conflict penalty,
\begin{equation}
G_t(v)=\frac{|N_v|}{|\Omega_v|},\qquad
L_t(v)=\frac{|C_v|}{|\Omega_v|},\qquad
s_t(v)=G_t(v)-\lambda L_t(v).
\end{equation}
Because $N_v$ and $C_v$ are disjoint, $\lambda$ re-ranks candidates rather than rescaling them. We use $\lambda\!=\!0.35$ throughout and ablate the choice in \S\ref{sec:lambda}.
\subsection{Theoretical guarantee and algorithm}
\label{sec:theory}

Standard noisy-oracle analysis of greedy submodular maximization \citep{krause2014submodular,hassidim2017submodular,badanidiyuru2014streaming,mirzasoleiman2018streaming} guarantees $f(\mathcal{V}_K)\ge(1-1/e)f(\mathcal{V}^*)-2\sum_t\epsilon_t$ under bounded per-step proxy error $\epsilon_t$. Allowing the depth-conflict ratio to amplify proxy uncertainty yields:

\begin{lemma}[Conflict-aware noisy oracle]
\label{lem:noisy}
Let $\Delta_t(v)=\Delta f(v\mid\mathcal{V}_{t-1})$ be the true marginal coverage and $\widehat\Delta_t(v)=G_t(v)$ the warping-oracle proxy. Suppose there is $\eta\ge0$ such that $|\widehat\Delta_t(v)-\Delta_t(v)|\le\epsilon_t+\eta L_t(v)$ for every candidate. Run conflict-aware greedy with $s_t(v)=G_t(v)-\lambda L_t(v)$ and $\lambda\ge\eta$, and let $\gamma_t=L_t(v_t^*)$ for an oracle-best candidate $v_t^*$. Then
\begin{equation}
\label{eq:noisy_oracle_bound}
f(\mathcal{V}_K)
\;\ge\;
(1-1/e)f(\mathcal{V}^*)
-
\sum_{t=1}^{K}(2\epsilon_t+2\lambda\gamma_t).
\end{equation}
\end{lemma}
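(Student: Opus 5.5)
The plan is to run the standard greedy analysis for monotone submodular maximization. The coverage function $f(\mathcal{V})=|\bigcup_{v\in\mathcal{V}}O(v;\mathcal{S})|$ is monotone and submodular. We first establish a per-step approximate-greedy inequality and then unroll it. Let $v_t$ be the candidate chosen by $s_t$, and let $v_t^*\in\arg\max_v\Delta_t(v)$ be an oracle-best candidate.

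\textbf{Step 1: per-step comparison.} Because $v_t$ maximizes $s_t$,
\[
G_t(v_t)-\lambda L_t(v_t)\ \ge\ G_t(v_t^*)-\lambda\gamma_t .
\]
Apply the error hypothesis on both sides:
\[
\Delta_t(v_t)\ \ge\ G_t(v_t)-\epsilon_t-\eta L_t(v_t)
\quad\text{and}\quad
G_t(v_t^*)\ \ge\ \Delta_t(v_t^*)-\epsilon_t-\eta\gamma_t .
\]
Combining these gives
\[
\Delta_t(v_t)\ \ge\ \Delta_t(v_t^*)-2\epsilon_t-(\lambda+\eta)\gamma_t+(\lambda-\eta)L_t(v_t).
\]
Since $\lambda\ge\eta$, the last term is nonnegative and can be dropped. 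Using $\eta\le\lambda$ once more, we obtain
\[
\Delta_t(v_t)\ \ge\ \max_v\Delta_t(v)-2\epsilon_t-2\lambda\gamma_t .
\]
This is the key place where the condition $\lambda\ge\eta$ is used. The penalty absorbs the conflict-amplified error of the selected candidate, and only the oracle-best candidate's conflict $\gamma_t$ survives.

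\textbf{Step 2: submodular recursion.} Submodularity and monotonicity give
\[
\max_v\Delta_t(v)\ \ge\ \tfrac1K\bigl(f(\mathcal{V}^*)-f(\mathcal{V}_{t-1})\bigr).
\]
Write $\delta_t=2\epsilon_t+2\lambda\gamma_t$ and $g_t=f(\mathcal{V}^*)-f(\mathcal{V}_t)$. Step 1 then yields the recursion
\[
g_t\ \le\ (1-1/K)\,g_{t-1}+\delta_t .
\]
Unrolling from $g_0\le f(\mathcal{V}^*)$ gives
\[
g_K\ \le\ (1-1/K)^K f(\mathcal{V}^*)+\sum_t(1-1/K)^{K-t}\delta_t
\ \le\ e^{-1}f(\mathcal{V}^*)+\sum_t\delta_t ,
\]
because each weight is at most one. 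Rearranging yields \eqref{eq:noisy_oracle_bound}.

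\textbf{Main obstacle.} The recursion is routine. The delicate step is the sign bookkeeping in Step 1. The noise bound must be applied in the correct direction for each of $v_t$ and $v_t^*$, and we must confirm that the conflict term of the selected view cancels against the penalty rather than accumulating. If that cancellation failed, I would instead bound $\eta L_t(v_t)$ by $\lambda L_t(v_t)$ and charge it to the score gap.

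A secondary subtlety is that $\Delta_t$ is defined relative to $\mathcal{V}_{t-1}$, the actual greedy history. The hypothesis must therefore hold uniformly over the realized history, which the statement's phrase ``for every candidate'' at each step provides.
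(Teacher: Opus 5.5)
Your proposal is correct and follows the paper's proof essentially line for line. Both compare the $s_t$-maximizer against an oracle-best $v_t^*$, drop $(\lambda-\eta)L_t(v_t)\ge 0$, bound $(\lambda+\eta)\gamma_t\le 2\lambda\gamma_t$, and then unroll the recursion $g_t\le(1-1/K)g_{t-1}+\delta_t$ using $(1-1/K)^K\le e^{-1}$ and weights $(1-1/K)^{K-t}\le 1$; the paper notes this same tighter weighted sum.
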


The proof is in Appendix~\ref{app:proof}. The constant $\eta$ is not assumed known a priori: the conflict weight $\lambda\!=\!0.35$ used in the rest of the paper is validated by the $\lambda$ sensitivity sweep (\S\ref{sec:lambda}), which shows a wide stable plateau in $\lambda\!\in\![0.1,0.5]$ that absorbs reasonable $\eta$ mis-estimation.

\textbf{Algorithm.} Algorithm~\ref{alg:greedy} packages the conflict-aware greedy loop. Starting from a seed $v_0$ chosen from interior candidates, \textsc{COVER} iterates $K\!-\!1$ rounds: warp the accumulated cloud into each remaining candidate, score by the conflict-aware $s_t$, render the chosen candidate, and update the cloud. The seed is shared across all baselines in \S\ref{sec:cov}, so coverage gains are not inflated by seed choice. Hyperparameter defaults and the production-side adaptive frame-budget heuristic (gain-gradient early stop) are deferred to Appendix~\ref{app:hyperparams}.
\begin{algorithm}[h]
\caption{\textsc{COVER}: Conflict-Aware Budgeted Greedy ERP View Selection.}
\label{alg:greedy}
\begin{algorithmic}[1]
\Require feasible candidates $\mathcal{P}_\varphi$, scene $\mathcal{S}$, budget $K$, conflict weight $\lambda$, probe resolution $h\!\times\!w$, seed pool size $M_0$
\Ensure selected viewpoints $\mathcal{V}$ and per-frame ERP RGB / depth / pose
\State $v_0 \gets \arg\max$ single-view probe coverage among the $M_0$ feasible candidates closest to the AABB center  \Comment{seed pool restricts $v_0$ to interior, not boundary, candidates}
\State render $v_0$; $\mathcal{C} \gets$ unproject range\_depth$(v_0)$
\State $\mathcal{V} \gets \{v_0\}$
\While{$|\mathcal{V}| < K$}
  \ForAll{$v \in \mathcal{P}_\varphi \setminus \mathcal{V}$}
    \State $(H_v, D_v^{\text{hist}}) \gets$ warp$(\mathcal{C}, v)$
    \State $(Q_v, D_v^{\text{probe}}) \gets$ low-res probe at $v$
    \State compute $(E_v, N_v, C_v)$, $G_t(v)$, $L_t(v)$, $s_t(v)$ \Comment{\S\ref{sec:warping}}
  \EndFor
  \State $v_t \gets \arg\max_v s_t(v)$ \Comment{reachability used only as tie-breaker}
  \State render $v_t$; $\mathcal{C} \gets \mathcal{C} \cup$ unproject range\_depth$(v_t)$
  \State $\mathcal{V} \gets \mathcal{V} \cup \{v_t\}$
\EndWhile
\State \Return $\mathcal{V}$ and per-frame (RGB, range depth, pose)
\end{algorithmic}
\end{algorithm}

\subsection{Pipeline}
\label{sec:pipeline}
\label{sec:adapters}

The release ships two adapter classes. \textbf{Curator adapters} (Blender indoor, HM3D, ScanNet++) plug a source into the three-phase pipeline below. \textbf{Re-encoding adapters} (TartanGround \citep{patel2025tartanground}, OB3D \citep{ito2025ob3d}) take sources that already provide dense RGB-D-pose trajectories and convert them into the unified ERP + pose schema (\S\ref{sec:schema}) without running \textsc{COVER}: outdoor frames are full re-encoded source trajectories, not curator-selected subsets, so they do not carry the per-step provenance log. Per-source detail is in Table~\ref{tab:3_adapters} (Appendix~\ref{app:adapters}); failure modes are catalogued in Appendix~\ref{app:failures}.

\begin{figure}[htbp]
\centering
\includegraphics[width=0.89\linewidth]{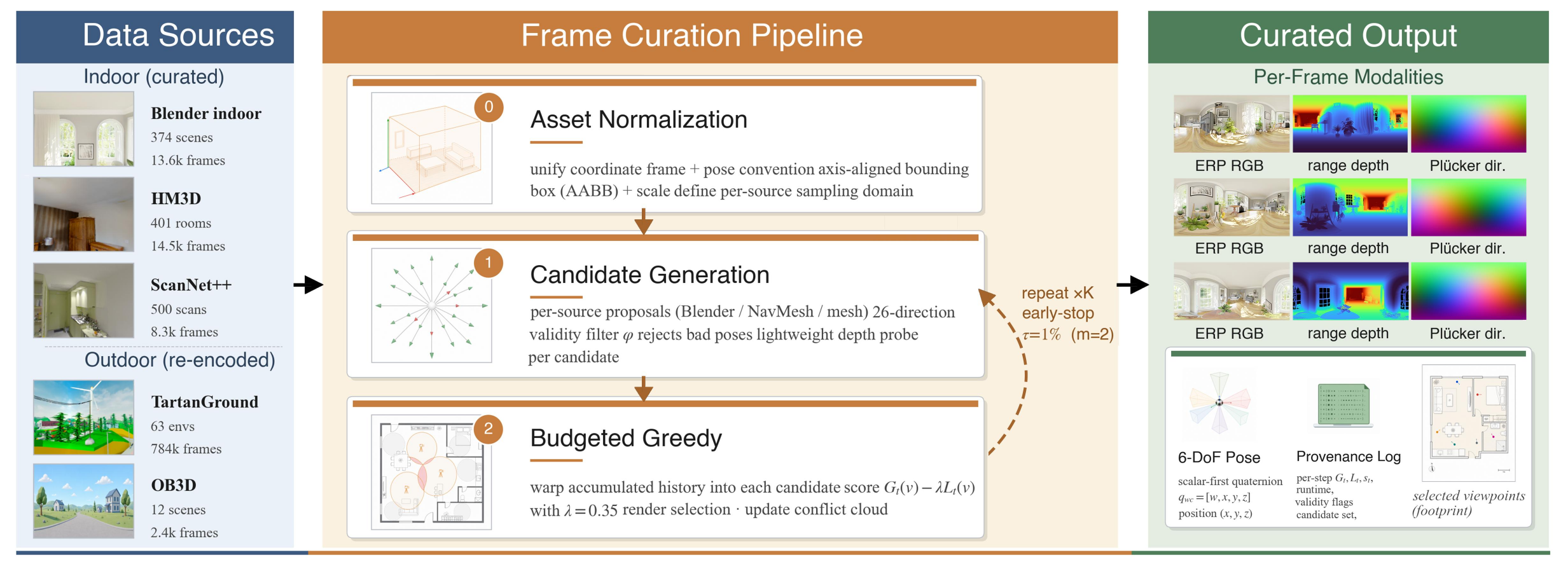}  
\caption{\textsc{COVER}'s three-phase per-scene pipeline (Algorithm~\ref{alg:greedy}). Each iteration emits one ERP RGB-depth-pose frame plus its per-step provenance log.}
\label{fig:3_pipeline}
\vspace{-4mm}
\end{figure}


\textsc{COVER} runs three phases per scene (Figure~\ref{fig:3_pipeline}), driven by a per-source adapter (handles Phases 0--1).

\textbf{Phase 0 (asset normalization).} The adapter loads the source, converts coordinates and pose into the unified schema (specified in \S\ref{sec:schema}), and computes the AABB.

\textbf{Phase 1 (candidate generation).} Candidates are proposed in a source-specific way (grid + height layers for Blender indoor, rendered with a procedural pipeline in the spirit of BlenderProc \citep{denninger2019blenderproc}; NavMesh / label-based room proposals for HM3D, derived from Habitat-Sim \citep{savva2019habitat}; mesh / point-cloud proposals for ScanNet++) and filtered by the 26-direction validity predicate $\varphi$ (Appendix~\ref{app:filter}); these thresholds are reported for auditability, not learned.

\textbf{Phase 2 (budgeted greedy).} Starting from a common seed $v_0$, the warping oracle scores remaining candidates, the chosen candidate is rendered at high resolution, and the accumulated point cloud is updated, repeating for $K\!-\!1$ rounds (Algorithm~\ref{alg:greedy}).

\section{The CM-EVS Dataset}
\label{sec:data}

We apply \textsc{COVER} across Blender indoor, HM3D, and ScanNet++ to build CM-EVS, a provenance-tracked panoramic RGB-D-pose dataset, complemented by schema-compatible outdoor panoramas re-encoded from TartanGround and OB3D. We specify the release's schema, composition, and cross-dataset position (\S\ref{sec:schema}), then characterize the four properties that distinguish CM-EVS (\S\ref{sec:datasetanalysis}).

\subsection{Release specifications}
\label{sec:schema}
\label{sec:composition}

\textbf{Schema and pose convention.} The world frame is right-handed with $+X$ right, $+Y$ up, $+Z$ forward; the camera frame follows OpenCV ($+x$ right, $+y$ down, $+z$ forward). Extrinsics are a scalar-first world-to-camera quaternion $q_{wc}=[w,x,y,z]$ and a position $C_w$ expressed relative to the scene's first selected frame, so a world point $p_w$ projects as $p_c=R_{wc}(p_w-C_w)$. ERP pixels use the standard spherical-CNN convention, longitude $(u/W-0.5)\,2\pi$ and latitude $(0.5-v/H)\,\pi$. Each frame ships RGB ($2048\!\times\!1024$ for Blender indoor, source-native otherwise), float32 range depth in metres, and pose; \textsc{COVER}-produced scenes additionally carry per-step logs of $(G_t, L_t, s_t)$ with the selected and candidate viewpoints. Scene-level splits keep frames from the same scene or space unit together.

\textbf{Composition.} Table~\ref{tab:4_composition} reports the per-source distribution; per-scene frame counts are not fixed but follow the gain-gradient early stop (Appendix~\ref{app:hyperparams}), with the resulting distributions characterized in \S\ref{sec:datasetanalysis}(d). Resolution differs across sources because real-scan inputs (HM3D, ScanNet++) carry source-side geometric and texture limits below $2048\!\times\!1024$; we render or re-encode at native source resolution rather than upsample.

\begin{table}[h]
\caption{Per-source dataset composition. Frames-per-scene reports median (IQR) over all scenes in the source. ``not redist.''~$=$~code MIT, frames not redistributed. }
\label{tab:4_composition}
\centering
\small
\setlength{\tabcolsep}{4pt}
\resizebox{\linewidth}{!}{%
\begin{tabular}{lrrlllr}
\toprule
Source & Scenes & Frames & Frames/scene\,(med, IQR) & Resolution & Median depth & License \\
\midrule
Blender indoor          & 374   & 13{,}631  & 33\,(27, 53)             & 2048$\times$1024 & 1.85\,m & CC-BY 4.0 \\
Outdoor / TartanGround  & 63 (envs)& 783{,}944& 9{,}360\,(4{,}915, 24{,}601) & 2048$\times$1024 & 3.63\,m\textsuperscript{$\dagger$} & per source \\
Outdoor / OB3D          & 12    & 2{,}400   & 200\,(200, 200)          & 1600$\times$800  & 3.88\,m & per source \\
HM3D                    & 401   & 14{,}475  & 37\,(26, 48)             & 1024$\times$512  & 1.63\,m & not redist. \\
ScanNet++               & 500   & 8{,}267   & 16\,(12, 20)             & 1024$\times$512  & 1.38\,m & not redist. \\
\midrule
Total & 1{,}350& 822{,}717& --- & mixed & --- & mixed \\
\bottomrule
\end{tabular}}
\vspace{-4mm}
\end{table}


\begin{table}[h]
\caption{CM-EVS vs.\ existing panoramic / 3D-scene resources. \CIRCLE\,= yes, \Circle\,= no, \LEFTcircle\,= partial. ``Frames/scene'' is the median per-scene frame count under each corpus's release policy.}
\label{tab:4_compare}
\centering
\small
\setlength{\tabcolsep}{4pt}
\resizebox{\linewidth}{!}{%
\begin{tabular}{lrrrcccll}
\toprule
Resource & \# scenes & \# frames & Frames/scene & Resolution & RGB-D & ERP & Trajectory & Release \\
\midrule
Matterport3D                  & 90 bldgs   & 10.8K / 194K       & $\sim$120 & src-dep.        & \CIRCLE & \LEFTcircle & operator & gated \\
ScanNet                       & 1{,}513    & 2.5M               & $\sim$1{,}650 & 640$\times$480  & \CIRCLE & \Circle & handheld & gated \\
ScanNet++                     & 1{,}006    & 280K + 3.7M        & $\sim$3{,}960 & src-dep.        & \CIRCLE & \Circle & handheld & gated \\
HM3D                          & 1{,}000    & user-rendered      & user-set & sim-dep.        & \LEFTcircle & \Circle & simulator & gated \\
Hypersim                      & 461        & 77K                & 168 & 1024$\times$768 & \CIRCLE & \Circle & toolkit & public \\
Pano3D-style                  & $\sim$100  & $\sim$3.6K         & $\sim$36 & 2048$\times$1024 & \CIRCLE & \CIRCLE & derived & mixed \\
360DVD                        & 2{,}114    & 211K               & 100 & 1280$\times$720 & \Circle & \CIRCLE & scripted video & mixed \\
Matrix-Pano                   & 116.8K     & 16M                & $\sim$138 & src-dep.        & \CIRCLE & \CIRCLE & generative & mixed \\
\textbf{CM-EVS (curator)}     & 1{,}275    & \textbf{36{,}373}  & \textbf{$\sim$25 (median)} & native       & \CIRCLE & \CIRCLE & \textbf{CM-EVS} & mixed \\
\textbf{CM-EVS Blender indoor} & 374        & \textbf{13{,}631}  & 33 (med, IQR 27--53) & 2048$\times$1024 & \CIRCLE & \CIRCLE & \textbf{CM-EVS} & \textbf{CC-BY 4.0} \\
\textbf{CM-EVS total}    & 1{,}350    & 822{,}717          & --- & mixed            & \CIRCLE & \CIRCLE & \textbf{CM-EVS} & mixed \\
\bottomrule
\end{tabular}}
\end{table}

\subsection{Distinguishing properties}
\label{sec:datasetanalysis}
\label{sec:atglance}

We characterize CM-EVS along four distinguishing properties (Figure~\ref{fig:4_simple_overview_of_CM-EVS}): (a) \emph{multi-view $4\pi$ coverage}, (b) \emph{unified RGB-D-pose schema}, (c) \emph{scene-type diversity}, and (d) \emph{low redundancy at scale}. Per-frame quality statistics and the 50-frame audit are deferred to Appendix~\ref{app:audit}.
\begin{figure}[h]
\centering
\includegraphics[width=\linewidth]{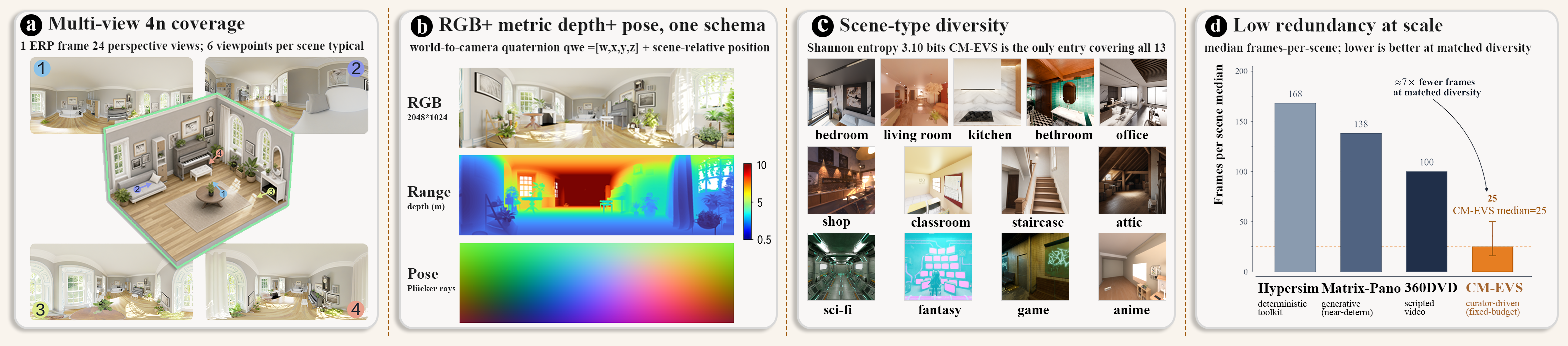}
\caption{CM-EVS: (a) multi-view $4\pi$ coverage, (b) RGB + metric range depth + pose under one schema, (c) scene-type diversity across 13 unified buckets, and (d) low redundancy at scale.}
\label{fig:4_simple_overview_of_CM-EVS}
\end{figure}

\textbf{(a) Multi-view $4\pi$ coverage.} Each scene's selected ERP viewpoints form a multi-view set spanning the space, with every viewpoint contributing a full $4\pi$ sphere rather than a slice (Figure~\ref{fig:4_simple_overview_of_CM-EVS}); a detailed example on a Blender indoor residential scene with six \textsc{COVER}-selected viewpoints spanning three functional zones (entryway, living area, bedroom alcove) and the accumulated point-cloud overlay is in Appendix~\ref{app:audit} (Figure~\ref{fig:case_multiview}).

\begin{figure}[h]
\centering
\includegraphics[width=0.99\linewidth]{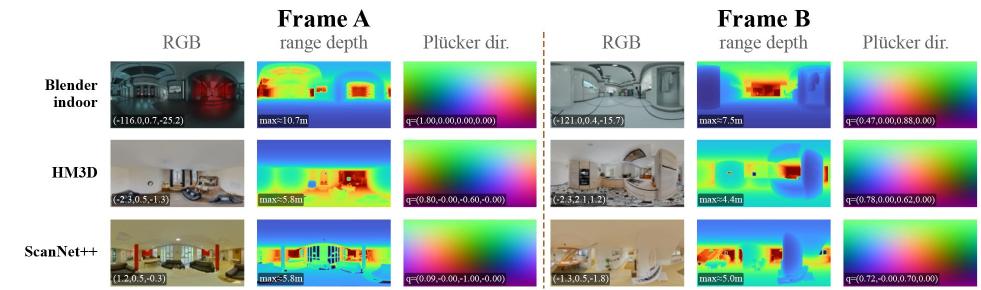}
\caption{Representative ERP frames per source. Each frame ships three modalities: RGB, range depth (Turbo colormap), and per-pixel Pl\"ucker ray direction \citep{levoy1996lightfield}.}
\label{fig:4_sample_mosaic}
\vspace{-4mm}
\end{figure}

\textbf{(b) Unified RGB-D-pose schema.} Every frame ships RGB, ERP range depth, and pose (\S\ref{sec:schema}); Figure~\ref{fig:4_sample_mosaic} shows the three modalities co-rendered per source. Per-source depth distributions span $0.3$--$30+$\,m for Blender indoor and concentrate around $1.4$--$1.9$\,m for HM3D and ScanNet++, with outdoor sources extending to tens of metres (Appendix~\ref{app:audit}).

\textbf{(c) Scene-type diversity.} We bucket scenes into 13 coarse room-type categories (Appendix~\ref{app:audit}). Figure~\ref{fig:4_xdataset_roomtype} compares CM-EVS against five ERP / 3D-scene baselines: CM-EVS covers all 13 buckets, with Shannon entropy 3.10\,bits in the same tier as Matterport3D (3.15) and Hypersim (2.98) and Gini concentration 0.49 (lower is more even). Blender indoor fills commercial / attic / basement / library types absent from real-scan campaigns, while HM3D / ScanNet++ supply residential rooms (bedroom + living room + kitchen $> 60\%$). Figure~\ref{fig:4_case_cross_source} applies the same \textsc{COVER} recipe ($\lambda\!=\!0.35$, default early stop) to a Blender indoor commercial space, an HM3D bedroom, and a ScanNet++ kitchen under one schema.

\begin{figure}[h]
\centering
\includegraphics[width=0.9\linewidth]{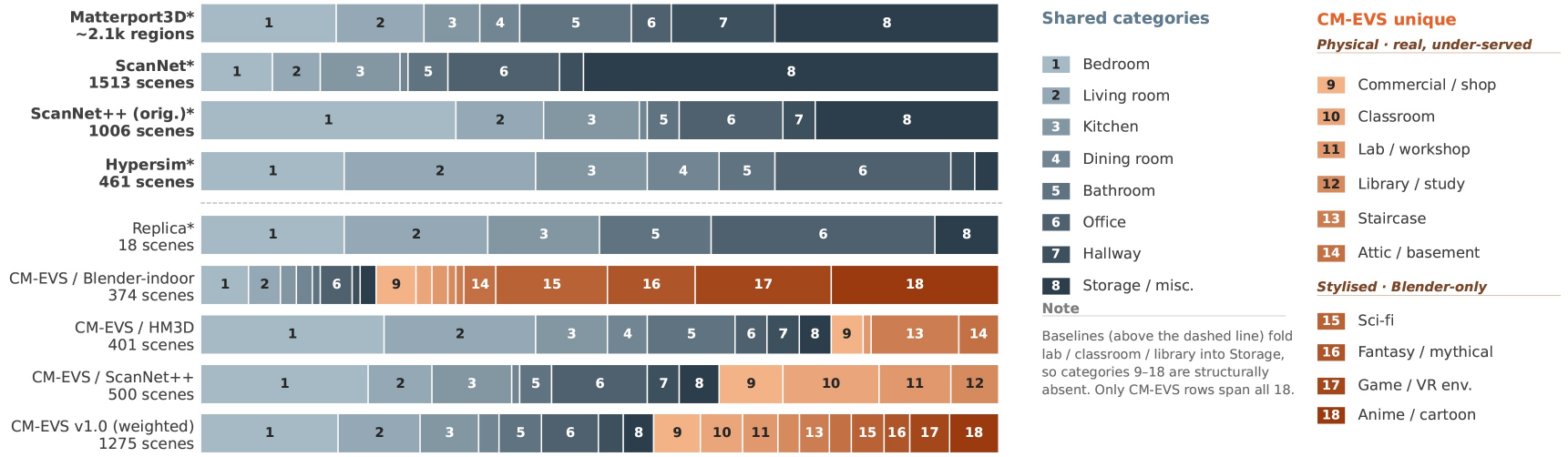}
\caption{Room-type composition across CM-EVS and five baselines (13-bucket taxonomy). The weighted CM-EVS row (bottom) covers all 13 buckets.}
\label{fig:4_xdataset_roomtype}
\end{figure}

\begin{figure}[h]
\centering
\includegraphics[width=\linewidth]{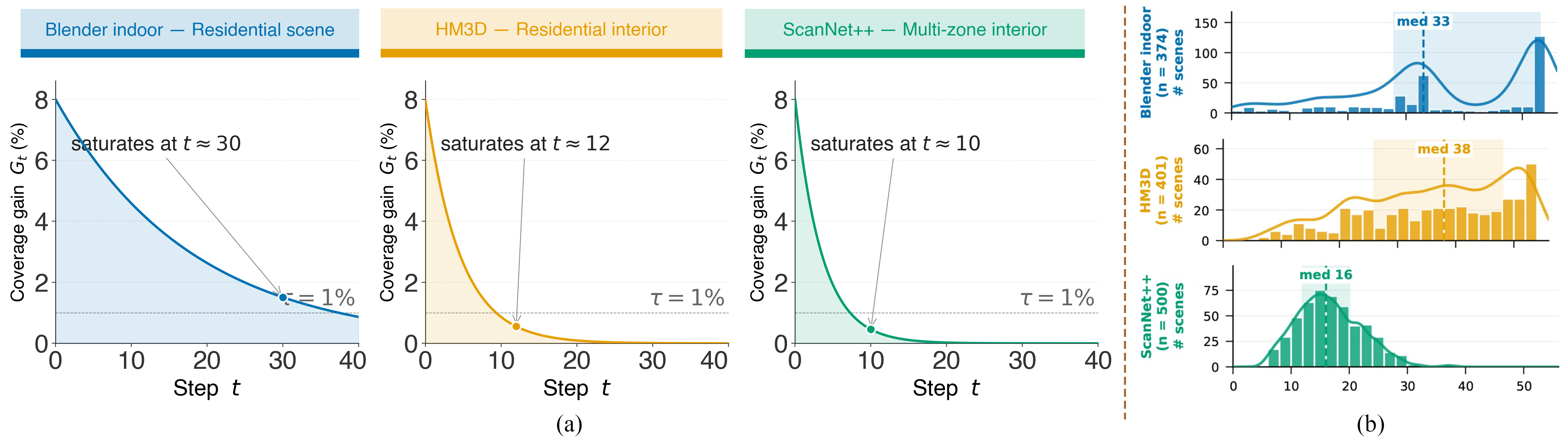}
\caption{Cross-source curator behavior ($\lambda\!=\!0.35$, $\tau\!=\!1\%$ early stop). \textbf{(a)} Per-step coverage gain $G_t$. \textbf{(b)} Per-scene frame-count distribution per source.}
\label{fig:4_case_cross_source}
\end{figure}


\textbf{(d) Low redundancy at scale.} Each scene terminates when its marginal coverage drops below $\tau\!=\!1\%$ for $m\!=\!2$ steps (gain-gradient early stop, Appendix~\ref{app:hyperparams}). Figure~\ref{fig:4_case_cross_source}(b) shows the per-scene frame-count distribution on the three curator sources; the 1--54 spread reflects scene complexity, with small ScanNet++ rooms saturating quickly and cluttered Blender interiors consuming the most frames. Figure~\ref{fig:4_simple_overview_of_CM-EVS}(d) compares CM-EVS with ERP / 3D-scene baselines that use fixed per-scene budgets (Hypersim 168, Matrix-Pano 138, 360DVD 100, Matterport3D $\sim$120): with a median of $\sim$25 frames per indoor scene, CM-EVS uses roughly $4$--$7\times$ fewer frames while retaining compact scene-level coverage. Figure~\ref{fig:4_case_lowred} (Appendix~\ref{app:lowred}) illustrates the saturation behavior on an open-plan office: at $K\!=\!8$ all four functional zones (reception, meeting, workstation cluster, kitchenette) are covered by $t\!\approx\!6$; at $K\!=\!30$ the marginal gain drops below $\tau\!=\!1\%$ around $t\!\approx\!22$.


\section{Curator analysis}
\label{sec:eval}


We empirically study the curator's behavior along three axes: how it compares to
data-free and coverage-only baselines under a fixed budget (\S\ref{sec:cov}),
how it responds to the conflict-weight $\lambda$ (\S\ref{sec:lambda}), and whether
the same code path generalizes across our three indoor sources (\S\ref{sec:cross}).
The noisy-oracle bound of Lemma~\ref{lem:noisy} is consistent with a stable $\lambda$ plateau
observed in \S\ref{sec:lambda}.
Experimental setup, hardware, and per-source artifact pointers are listed in
Appendix~\ref{app:hardware}.

\subsection{Fixed-budget coverage}
\label{sec:cov}

All selectors operate on the same feasible candidate pool $\mathcal{P}_\varphi$ (\S\ref{sec:input}) and start from the same seed viewpoint $v_0$ (\S\ref{sec:pipeline}). We compare five selection rules at $K=4$: (i) Random-seeded; (ii) Single-view probe, which scores candidates once from $v_0$ without iterative re-ranking; (iii) Greedy coverage, ranking by $G_t$ only and serving as the coverage upper reference under this oracle; (iv) Low-conflict only, ranking by $L_t$ only; (v) CM-EVS, ranking by $G_t-\lambda L_t$ with $\lambda=0.35$.

\begin{wraptable}{r}{0.55\textwidth}
\caption{Fixed-budget coverage at $K\!=\!4$ on \texttt{scene\_indoor\_0012}.}
\label{tab:5_cov_pilot}
\centering
\small
\resizebox{\linewidth}{!}{
\begin{tabular}{lccc}
\toprule
Method & cov.\ ($\uparrow$) & cov.\ / view ($\uparrow$) & conflict ($\downarrow$) \\
\midrule
Random-seeded                  & 0.0096 & 0.0024 & 0.1192 \\
Single-view probe              & 0.0021 & 0.0005 & 0.1723 \\
Greedy coverage                & \textbf{0.1055} & \textbf{0.0264} & 0.0193 \\
Low-conflict only              & 0.1025 & 0.0256 & \textbf{0.0164} \\
CM-EVS conflict-aware (ours)   & \underline{0.1032} & \underline{0.0258} & \underline{0.0170} \\
\bottomrule
\end{tabular}
}
\end{wraptable}
Non-iterative baselines (Random-seeded, Single-view probe) collapse on this pilot; greedy re-ranking is the main driver of coverage. CM-EVS matches the coverage of Greedy coverage while shifting selection toward lower-conflict viewpoints, whereas Low-conflict only is overly conservative. Together these confirm that $-\lambda L_t$ acts as a re-ranking signal at small coverage cost, not a coverage-shrinking penalty.

\subsection{$\lambda$ sensitivity}
\label{sec:lambda}


We sweep $\lambda\!\in\!\{0,\,0.05,\,0.1,\,0.2,\,0.35,\,0.5,\,0.75,\,1.0\}$ at $K\!=\!30$ on a 10-scene Blender indoor pool (Table~\ref{tab:lambda}). At $\lambda\!=\!0$, the selector collapses onto a high-conflict mode, confirming that the warping-oracle proxy alone is not stable for view selection. Enabling the penalty restores coverage, and $\lambda\!\in\![0.1, 0.5]$ forms the stable plateau anticipated by Lemma~\ref{lem:noisy}; beyond it, coverage is gradually traded for further conflict reduction. We therefore adopt $\lambda\!=\!0.35$ as a conservative default that lowers conflict while staying near the coverage plateau. Figures~\ref{fig:lambda_diversity}--\ref{fig:selection_geometry} show why $\lambda\!=\!0$ underperforms despite optimizing the gain proxy: the selector concentrates its budget on a tight off-center cluster in candidate-feature space, hits fewer regions, and barely overlaps with any $\lambda\!>\!0$ selection. The default $\lambda\!=\!0.35$ disperses selections to better match the candidate-pool geometry, and the same operating point remains stable across Blender, HM3D, and ScanNet++.

\begin{table}[t]
\caption{CM-EVS at $K\!=\!30$. \textbf{(a)} $\lambda$ sensitivity on a 10-scene
Blender indoor pool; $\lambda\!=\!0.35$ is the paper default. \textbf{(b)} the
same selector at $\lambda\!=\!0.35$ on each curator source (10 scenes each).
Metric definitions follow Table~\ref{tab:5_cov_pilot}; top-block best per
column in \textbf{bold}, second-best \underline{underlined}.
}
\label{tab:lambda}
\label{tab:cross}
\centering
\small
\begin{minipage}[c]{0.55\linewidth}
\centering
\textbf{(a) $\lambda$ sweep on Blender indoor.}\\[3pt]
\setlength{\tabcolsep}{5pt}
\begin{tabular}{@{}lccc@{}}
\toprule
$\lambda$ & cov.\ ($\uparrow$) & cov./view ($\uparrow$) & conflict ($\downarrow$) \\
\midrule
0.00 & 0.1878             & 0.0063             & 0.1561 \\
0.05 & 0.2824             & 0.0094             & 0.0816 \\
0.10 & 0.3724             & 0.0124             & 0.0388 \\
0.20 & \textbf{0.4322}    & \textbf{0.0144}    & 0.0223 \\
0.35 & \underline{0.4127} & \underline{0.0138} & 0.0175 \\
0.50 & 0.3948             & 0.0132             & 0.0162 \\
0.75 & 0.3547             & 0.0118             & \underline{0.0133} \\
1.00 & 0.3343             & 0.0111             & \textbf{0.0110} \\
\bottomrule
\end{tabular}
\end{minipage}\hfill
\begin{minipage}[c]{0.43\linewidth}
\centering
\textbf{(b) Cross-source at $\lambda\!=\!0.35$.}\\[3pt]
\setlength{\tabcolsep}{5pt}
\begin{tabular}{@{}lccc@{}}
\toprule
Source    & cov.\ ($\uparrow$) & cov./v.\ ($\uparrow$) & conf.\ ($\downarrow$) \\
\midrule
Blender   & 0.4127             & 0.0138                & 0.0175 \\
HM3D      & 0.3927             & 0.0131                & 0.0713 \\
ScanNet++ & 0.7351             & 0.0245                & 0.0103 \\
\bottomrule
\end{tabular}
\end{minipage}
\vspace{-4mm}
\end{table}

\begin{figure}[h]
\centering
\includegraphics[width=\linewidth]{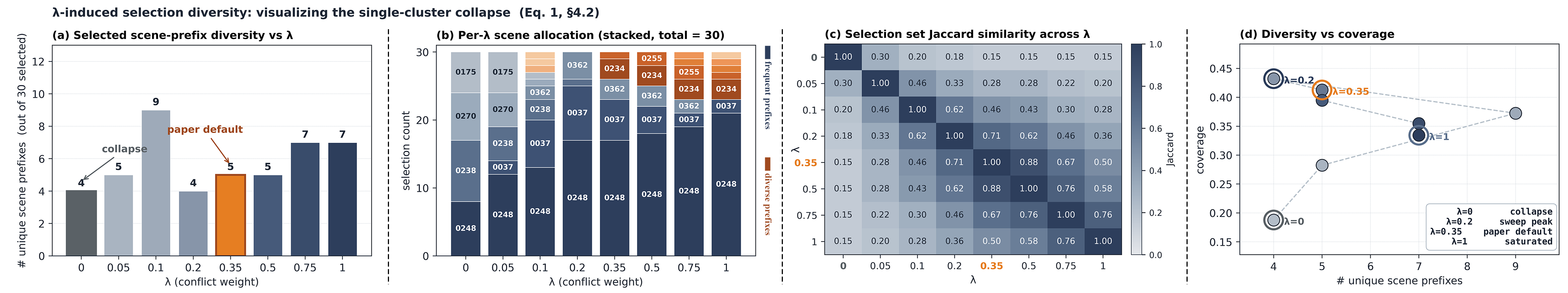}
\caption{Selection diversity vs.\ $\lambda$ on a Blender multi-scene pool ($K\!=\!30$): (a) unique scene prefixes hit, (b) per-prefix allocation, (c) pairwise Jaccard
similarity, (d) diversity vs.\ coverage.}
\label{fig:lambda_diversity}
\end{figure}

\begin{figure}[h]
\centering
\includegraphics[width=\linewidth]{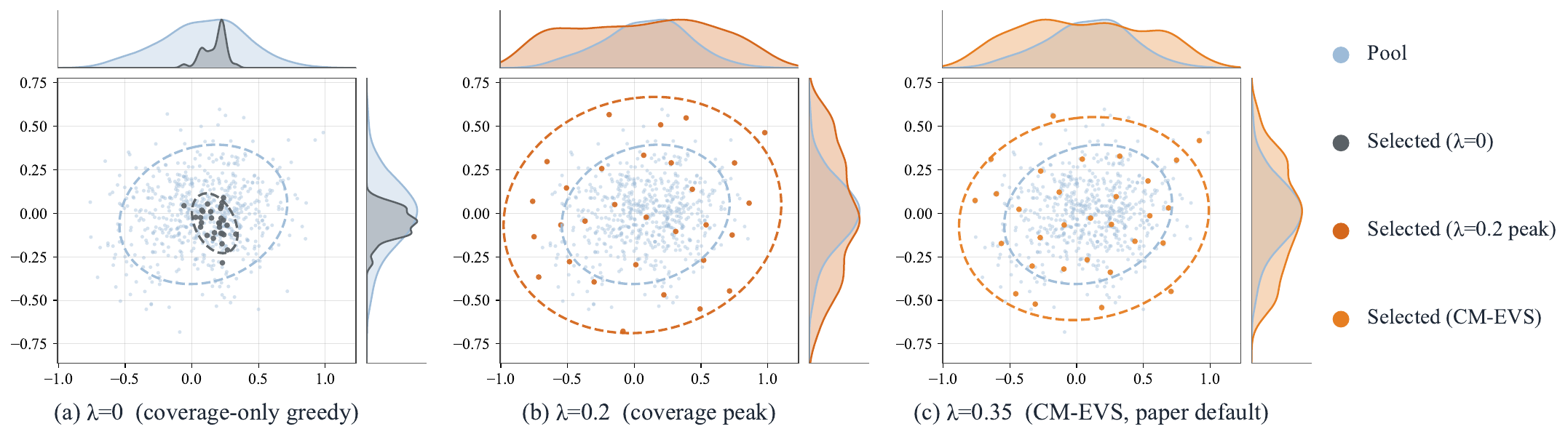}
\caption{Selection geometry on Blender ($K=30$). $\lambda=0$ collapses to a tight off-centre cluster inside the candidate pool; $\lambda=0.2$ partially spreads; \textsc{COVER}'s default $\lambda=0.35$ covers the pool.}
\label{fig:selection_geometry}
\vspace{-4mm}
\end{figure}



\subsection{Cross-source consistency}
\label{sec:cross}

We run the same CM-EVS selector on each curator source with fixed hyperparameters ($\lambda\!=\!0.35$, $K\!=\!30$). Table~\ref{tab:cross} reports coverage on Blender indoor (0.413), HM3D (0.393), and ScanNet++ (0.735): the $1.8\times$ higher coverage on ScanNet++ reflects its smaller, cleaner room-scale scans, where fewer feasible candidates suffice and greedy selection saturates quickly. HM3D carries a substantially higher conflict prior (0.0713 versus 0.0175 on Blender indoor and 0.0103 on ScanNet++), consistent with noisier real-scan geometry. Despite a $7\times$ spread in conflict statistics, the same selection rule produces a stable operating point across all three sources under fixed defaults.

\section{Conclusion}
\label{sec:conclusion}
In this work, we identified panoramic observation construction as a critical yet under-specified layer of the data pipeline, and introduced \textbf{COVER}, a training-free ERP viewpoint curator that selects sparse views by balancing incremental scene coverage against range-depth conflicts. Built with COVER, \textbf{CM-EVS} provides a provenance-tracked panoramic RGB-D-pose dataset with sparse yet comprehensive scene coverage, unified range-depth and pose conventions, and auditable selection metadata. Our results suggest that panoramic datasets should be judged not only by frame count or source scale, but by coverage efficiency, redundancy, geometric consistency, and reproducibility. We hope CM-EVS helps move 3D visual learning toward more principled observation design, supporting future work in panoramic perception, reconstruction, generation, and spatial intelligence.

\paragraph{Limitations and Future Work.}
Our evaluation targets the curator layer: coverage and depth-conflict statistics on shared candidate pools, not downstream task accuracy. HM3D and ScanNet++ frames are regenerated locally through the released adapters under the original access terms. We plan to extend the curator to dynamic settings and benchmark released frames on ERP depth estimation, panoramic novel-view synthesis, 3D reconstruction, and world-model pretraining.

\paragraph{Broader Impact.}
By lowering the engineering cost of producing calibrated panoramic RGB-D resources, CM-EVS may stimulate research in panoramic perception, view planning, and 3D-consistent world-model pretraining. The conflict-aware curator and unified schema also offer a reproducible, compute-controlled paradigm for combining geometry-aware view selection with multi-source adapters.

\bibliographystyle{unsrt}
\bibliography{reference}

\appendix
\clearpage
\section*{Appendix Contents}

The page limit forces the main body to point to supporting evidence rather than reproduce it. The full Datasheet, the production hyperparameters and geometry filter, the per-source quality and failure analyses, the warping-oracle validation, and the proof of Lemma~\ref{lem:noisy} are collected here. The structure below is for reviewers who want to navigate to a specific item rather than read straight through.

\noindent\textbf{A\quad Datasheet for the dataset}\dotfill\pageref{app:datasheet}\\
\hspace*{2em}A.1\quad Composition\\
\hspace*{2em}A.2\quad Collection\\
\hspace*{2em}A.3\quad Preprocessing, cleaning, and labeling\\
\hspace*{2em}A.4\quad Uses\\
\hspace*{2em}A.5\quad Distribution and licensing\\
\hspace*{2em}A.6\quad Croissant metadata\\[3pt]
\noindent\textbf{B\quad Hyperparameters and geometry filter}\dotfill\pageref{app:hyperparams}\\
\hspace*{2em}B.1\quad Expanded overview of \textsc{COVER}\\
\hspace*{2em}B.2\quad Candidate grid\\
\hspace*{2em}B.3\quad Geometric sanity filter\\
\hspace*{2em}B.4\quad Greedy parameters\\
\hspace*{2em}B.5\quad Adaptive frame budgets\\[3pt]
\noindent\textbf{C\quad Quality and visual examples}\dotfill\pageref{app:audit}\\
\hspace*{2em}C.1\quad Per-source depth distribution\\
\hspace*{2em}C.2\quad Multi-view selection example\\
\hspace*{2em}C.3\quad Low-redundancy selection example\\
\hspace*{2em}C.4\quad Failure taxonomy\\
\hspace*{2em}C.5\quad Per-source bad-case rate\\
\hspace*{2em}C.6\quad Failure gallery\\
\hspace*{2em}C.7\quad Resolution status and v1.1 roadmap\\
\hspace*{2em}C.8\quad 50-frame quality audit\\[3pt]
\noindent\textbf{D\quad Warping oracle empirical validation}\dotfill\pageref{app:oracle}\\[3pt]
\noindent\textbf{E\quad Proof of Lemma~\ref{lem:noisy}}\dotfill\pageref{app:proof}\\[3pt]
\noindent\textbf{F\quad Extended related work}\dotfill\pageref{app:related-work}

\section{Datasheet for the dataset}
\label{app:datasheet}

\subsection{Composition}
\label{sec:sources}

\textbf{Instances.} Each instance is an ERP frame triple: RGB image, range-depth array, and camera pose, plus a per-scene \texttt{meta.json} declaring the coordinate convention. Public downloadable instances come from redistributable Blender indoor assets and from the outdoor sources (TartanGround, OB3D) where their original licenses permit. HM3D and ScanNet++ are represented by scene ids, candidate / viewpoint metadata, and regeneration scripts.

\textbf{Counts.} See Table~\ref{tab:4_composition}. The headline release contains \textbf{13{,}631} ERP RGB-D-pose frames across \textbf{374} Blender indoor scenes (CC-BY 4.0). The full v1.0 release additionally provides re-encoded outdoor frames (TartanGround: 783{,}944 frames over 63 environments; OB3D: 2{,}400 frames over 12 scenes) plus adapter-regeneration scripts for HM3D (401 rooms; 14{,}475 frames after local regeneration) and ScanNet++ (500 scans; 8{,}267 frames after local regeneration), totalling 822{,}717 frames across 1{,}350 units. Per-source release status is shown in Table~\ref{tab:sources}.

\textbf{Sampling.} Indoor (Blender) frames are produced by \textsc{COVER} from a candidate grid (\S\ref{sec:input}--\S\ref{sec:warping}); HM3D and ScanNet++ frames are produced locally by \textsc{COVER} via adapter scripts (\S\ref{sec:adapters}). Outdoor frames (TartanGround, OB3D) are full source trajectories re-encoded into the unified schema; \textsc{COVER} does \emph{not} run on outdoor sources in v1.0, so outdoor frames carry the unified schema fields but not the per-step provenance log (\S\ref{sec:schema}). For TartanGround and OB3D, the source repositories already ship dense RGB-D-pose sequences along circular trajectories: cubemap renderings are re-encoded as ERP at the source's native resolution, poses are re-expressed in the unified right-handed Y-up convention with a world-to-camera quaternion, and the full re-encoded trajectory is released. Candidate probes, intermediate caches, pre-render-all oracle frames, and locally regenerated HM3D / ScanNet++ outputs are excluded from the public frame count $F_{\text{pub}}$.

\begin{table}[h]
\caption{Per-source release status.}
\label{tab:sources}
\centering
\small
\setlength{\tabcolsep}{4pt}
\resizebox{\linewidth}{!}{%
\begin{tabular}{llll}
\toprule
Source & Units in v1.0 & Direct release & Release content \\
\midrule
Blender indoor          & 374 scenes  & yes (CC-BY 4.0)        & RGB, depth, pose, viewpoints, candidates \\
Outdoor / TartanGround  & 63 environments& no& RGB, depth, pose (re-encoded; full trajectory) \\
Outdoor / OB3D          & 12 scenes   & no& RGB, depth, pose (re-encoded; full trajectory) \\
HM3D                    & 401 rooms   & no                     & scene ids, metadata, adapter, regen scripts \\
ScanNet++               & 500 scans   & no                     & scan ids, metadata, adapter, regen scripts \\
\bottomrule
\end{tabular}}
\end{table}

\textbf{Fields.} RGB PNG ($2048\!\times\!1024$ for Blender indoor; native source resolution otherwise), range-depth \texttt{.npy} (float32, metres), pose \texttt{.json} (scalar-first $q_{wc}=[w,x,y,z]$, position $C_w-C_{w,0}$ relative to the scene's first selected frame, \texttt{camera\_type}), per-scene \texttt{meta.json} (coordinate-standard declaration), \texttt{metadata/selected\_viewpoints.json} (chosen ids, scores, gains, conflicts), \texttt{metadata/candidates.jsonl} (feasible candidates plus validity flags), \texttt{metadata/per\_step\_log.jsonl} (per-step $G_t$, $L_t$, $s_t$, runtime; populated for \textsc{COVER}-produced frames only), source id, scene id, split id, optional room / space-unit id, and a per-frame quality CSV that ships with the release.

\textbf{Missing values.} Invalid depth pixels are NaN or 0 by source convention and documented in metadata; the per-frame invalid-depth ratio is reported in \S\ref{sec:datasetanalysis} (median 1.4\% across the 36{,}373 \textsc{COVER}-produced frames).

\textbf{Splits.} The default split is scene-level 70 / 15 / 15, with frames from the same scene or space unit kept in the same split. Downstream task evaluation is not included in this submission and is deferred to follow-up work (see Limitations); the per-step provenance log shipped with every \textsc{COVER}-produced frame lets downstream users rerun any alternative viewpoint policy on the same candidate set.

\subsection{Collection}

Indoor data is produced by \textsc{COVER}: asset loading, coordinate normalization, candidate generation, 26-direction geometric-validity filtering (\S\ref{app:filter}), conflict-aware greedy selection (\S\ref{sec:warping}), high-resolution ERP rendering at $2048\!\times\!1024$, and export under the unified schema (\S\ref{sec:schema}). Outdoor data is sourced from TartanGround and OB3D and re-encoded into the unified schema; in v1.0 \textsc{COVER} does \emph{not} run on outdoor sources, so the outdoor portion releases the full re-encoded source trajectory rather than a \textsc{COVER}-selected subset, and outdoor frames do not carry the per-step provenance log (\S\ref{sec:adapters}). HM3D and ScanNet++ frames are not redistributed: the release ships adapter-regeneration scripts that produce matched frames locally after the user accepts upstream license terms. No new human-subject data is collected; real-scan sources are used only under their existing data access terms.

\label{app:hardware}
\textbf{Production hardware and compute.} One node with 8$\times$ NVIDIA H100 80GB HBM3 (NVLink-interconnected), 2$\times$ Intel Xeon Platinum 8558, 2\,TB system RAM, CUDA 12.4 on Ubuntu 24.04. The dominant cost is high-resolution Cycles ERP rendering at $2048\!\times\!1024$ (seconds to minutes per frame); per-source wall-clock plus $K\!=\!24$ coverage, baseline distance, and per-unit runtime are logged in \texttt{results/coverage\_extended.csv} and \texttt{wallclock.json}. The dataset-analysis script of \S\ref{sec:datasetanalysis} processes the 36{,}373 \textsc{COVER}-produced frames in $\sim$13 wall-clock minutes on this hardware. Collection timeframe, production-script version tag, and quality-control staffing are anonymized during review and will be disclosed in the camera-ready version.

\label{app:adapters}
\begin{table}[htbp]
\caption{Per-source adapter classes. Curator adapters run Algorithm~\ref{alg:greedy}; re-encoding adapters do not.}
\label{tab:3_adapters}
\centering
\small
\begin{tabular}{p{2.5cm}p{2.4cm}p{7.6cm}}
\toprule
Source & Adapter class & Adapter detail \\
\midrule
Blender indoor                & curator      & Cycles ERP camera; horizontal grid + height layers; raycast validity filter \\
TartanGround                  & re-encoding  & cubemap $\to$ ERP at native resolution; pose re-expressed in unified $w2c$ convention; full source trajectory released \\
OB3D                          & re-encoding  & ERP renders re-encoded with corrected pose convention; full source trajectory released \\
HM3D                          & curator      & space-unit proposal from NavMesh / clusters / labels before Phase 1; per-room Gini reported; frames regenerated locally \\
ScanNet++                     & curator      & mesh / point-cloud modes; point-cloud mode degrades surface tests to AABB + splat Z-buffer; frames regenerated locally \\
\bottomrule
\end{tabular}
\end{table}
\textbf{Per-source adapters}
Table~\ref{tab:3_adapters} lists the per-source adapter detail referenced from \S\ref{sec:adapters}: \emph{curator adapters} (Blender indoor, HM3D, ScanNet++) plug a source into Algorithm~\ref{alg:greedy} through source-specific candidate proposal and validity filtering, while \emph{re-encoding adapters} (TartanGround, OB3D) convert dense source trajectories into the unified ERP schema without running \textsc{COVER}.

\subsection{Preprocessing, cleaning, and labeling}
\label{app:preprocessing}

\textbf{Coordinate normalization.} Per-source adapters (Table~\ref{tab:3_adapters}) normalize heterogeneous assets into a common right-handed, $+Y$-up world frame and a scalar-first $q_{wc}$ camera frame before candidate generation (\S\ref{sec:schema}). Outdoor sources (TartanGround, OB3D) are re-encoded into the same convention without running \textsc{COVER}, so each released frame loads under one schema regardless of provenance.

\textbf{Candidate filtering.} Every candidate viewpoint passes through the 26-direction geometric sanity filter (Appendix~\ref{app:filter}, layers 1--7), which rejects embedded-camera, wall-flush, out-of-AABB, and degenerate-geometry candidates \emph{before} \textsc{COVER} sees them. After rendering, a per-frame finite-depth-ratio threshold rejects frames whose invalid-pixel ratio exceeds 90\%. The script \texttt{scripts/audit\_quality.py} reapplies the automated checks across the entire $F_{\text{pub}}$ release; its output is summarized in Appendix~\ref{app:audit_50}.

\textbf{Labeling.} The 13 unified room-type buckets are produced by mapping per-source room or scene labels (Blender asset metadata; HM3D scene category; ScanNet++ scan tags) into a single coarse taxonomy through a hand-authored deterministic table; no per-frame human labelling is performed.

\textbf{Raw data retention.} Original source assets are not redistributed: redistributable Blender indoor assets ship as CC-BY 4.0 ERP frames, while HM3D and ScanNet++ are accessed under their upstream terms via the released regeneration scripts (Table~\ref{tab:license}). Intermediate caches (candidate probes, pre-render-all oracle frames, rejected candidates) are excluded from the public frame count $F_{\text{pub}}$.

\textbf{Software.} All preprocessing, filtering, rendering, regeneration, and audit scripts are released under MIT through the anonymized code repository (\S\ref{app:license}); reviewers can replay every preprocessing stage on the released sample scene without external coordination.

\subsection{Uses}

\textbf{Recommended:} panoramic depth estimation, ERP novel-view synthesis, data-centric viewpoint policy comparison, view-planning research.

\textbf{Avoid:} identity-sensitive inference, safety-critical deployment, claims about private indoor spaces, treating synthetic-only results as real-world evidence without further validation.

\subsection{Distribution and licensing}
\label{app:license}

Blender indoor frames (CC-BY 4.0), \textsc{COVER} code, documentation, Datasheet, and Croissant metadata are released through the following anonymized review repositories:
\begin{itemize}
  \item \textbf{Dataset} (CM-EVS, Hugging Face dataset repo): \url{https://huggingface.co/datasets/anon-cmevs-2026/cmevs-erp-eval}
  \item \textbf{Reviewer quick sample} (one complete Blender indoor scene, 99/99 files, for schema, coordinate, and depth verification without downloading the full corpus): \url{https://huggingface.co/datasets/anon-cmevs-2026/cmevs-erp-eval/tree/main/blender_indoor/scenes/sence_indoor_0001}
  \item \textbf{Curator code} (Hugging Face code repo, MIT-licensed): \url{https://huggingface.co/anon-cmevs-2026/cmevs-code}
\end{itemize}
Outdoor frames are released to the extent permitted by TartanGround and OB3D upstream terms (Table~\ref{tab:license}); where redistribution is not permitted, we ship the re-encoding script. HM3D and ScanNet++ are distributed only as metadata + regeneration scripts.

\begin{table}[h]
\caption{License matrix per release component.}
\label{tab:license}
\centering
\small
\setlength{\tabcolsep}{3pt}
\begin{tabular}{p{3.5cm}p{3.2cm}p{3.5cm}p{2.8cm}}
\toprule
Component & Source license & CM-EVS release license & Notes \\
\midrule
Blender indoor frames         & CC0 / CC-BY assets     & CC-BY 4.0                        & downloadable \\
Outdoor / TartanGround frames & upstream source license & redistributed within upstream terms; re-encoding script ships otherwise & re-encoded ERP + pose \\
Outdoor / OB3D frames         & upstream source license & redistributed within upstream terms; re-encoding script ships otherwise & re-encoded ERP + pose \\
HM3D scripts / metadata       & Matterport / HM3D EULA & MIT (code), allowed terms (meta) & no derived frames \\
ScanNet++ scripts / metadata  & ScanNet++ ToS          & MIT (code), allowed terms (meta) & no derived frames \\
CM-EVS source code            & original               & MIT                              & adapters, eval scripts \\
Documentation                 & original               & CC-BY 4.0                        & Datasheet, Croissant \\
\bottomrule
\end{tabular}
\end{table}

\subsection{Croissant metadata}
\label{app:croissant}

A Croissant 1.0 manifest (\texttt{metadata/croissant.json}) ships with the release. It declares the dataset name, license, keywords, and per-FileSet distribution (Blender indoor frames, outdoor TartanGround / OB3D frames, HM3D / ScanNet++ regeneration scripts, curator source code, documentation) with the per-component licenses of Table~\ref{tab:license}; the \texttt{recordSet} enumerates the per-frame fields (RGB, depth, pose quaternion / position, camera type, source / scene / room / split ids, plus the curator-only fields $G_t$, $L_t$, $s_t$, and candidate id); RAI fields cover \texttt{personalSensitiveInformation} (no new personal data; HM3D / ScanNet++ frames not redistributed) and \texttt{knownBiases} (source geography, architecture, and scanning biases; synthetic Blender materials may not match real-scan sensor noise).

\section{Hyperparameters and geometry filter}
\label{app:hyperparams}

\subsection{Expanded overview of \textsc{COVER}}
\label{app:overview_expanded}

Figure~\ref{fig:teaser} in the main body is one half of an expanded illustration we
split for layout reasons. Figure~\ref{fig:teaser_expanded} below is the second half:
it concretises the conflict-aware warping oracle of \S\ref{sec:warping}, showing the
per-step explained / new / conflict mask split, the score
$s_t(v) = G_t(v) - \lambda L_t(v)$, and the accumulated-point-cloud update rule that
turns each iteration's selected ERP into the next state $\mathcal{C}_t$.

\begin{figure}[h]
\centering
\includegraphics[width=\linewidth]{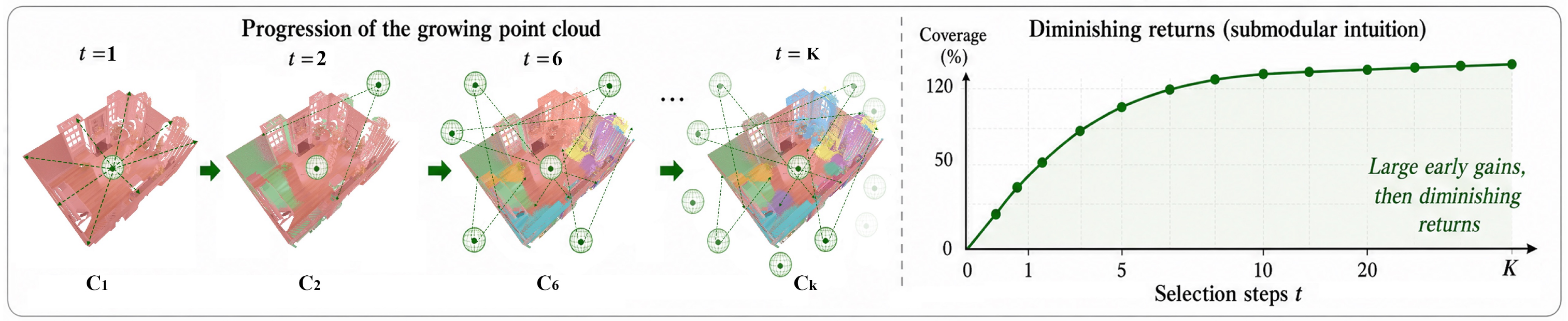}
\caption{Expanded overview of \textsc{COVER}: concretisation of the conflict-aware
warping oracle and the per-step state update, complementing Figure~\ref{fig:teaser}
in the main body.}
\label{fig:teaser_expanded}
\end{figure}

\subsection{Candidate grid}

\begin{table}[h]
\centering
\small
\begin{tabular}{lll}
\toprule
Parameter & Default & Notes \\
\midrule
horizontal grid spacing  & 0.5 m                                & adaptively enlarged for very large scenes \\
boundary margin          & 0.2 m                                & avoids wall-adjacent boundary points \\
candidate cap            & 10 000                               & above this, spacing increases \\
indoor height layers     & 0.5, 0.8, 1.2, 1.7, 2.1 m             & for low rooms \\
extra high-room layers   & +1.0, +1.5, +2.0, +2.5, +3.0 m       & when effective ceiling allows \\
top clipping             & $h_{\text{ceil}} - 0.3$ m            & avoids ceiling-adjacent extreme poles \\
\bottomrule
\end{tabular}
\end{table}

\subsection{Geometric sanity filter}
\label{app:filter}

The predicate uses 26 spherical raycasts (16 horizontal + 10 angled) plus 2 dedicated vertical rays (up, down) for a total of 28 rays per candidate; layers 1, 2 use the vertical pair, layers 3, 5, 7 use only the 16 horizontal rays, and layers 4, 6 use the full 26 spherical set. Thresholds are reported under the indoor (Blender / HM3D / ScanNet++) curator setting; an outdoor sky-visibility variant of layer 1 is provided in the codebase but is not exercised by v1.0 (which does not run \textsc{COVER} on outdoor sources).

\begin{table}[h]
\centering
\small
\setlength{\tabcolsep}{4pt}
\begin{tabular}{rp{2.7cm}p{4.5cm}p{4.2cm}}
\toprule
Layer & Predicate & Indoor threshold & Failure mode \\
\midrule
1 & up / down hits        & up $\le \max(5, h_{\text{ceil}})$, down $\le \max(3, h_{\text{ceil}})$ & embedded camera, atria \\
2 & inside geometry       & reject if $\ge 2$ directions $<$ 0.2 m                                  & embedded camera \\
3 & corner                & reject if $> 50\%$ of 16 horiz.\ rays $<$ 1.0 m                          & corners / closets \\
4 & enclosure             & reject if hit-rate $\ge 0.9$, CV $<0.3$, max $<8$ m                       & small enclosed spaces \\
5 & wall proximity        & horiz.\ min $<$ 0.3 m reject                                              & too close to wall \\
6 & visible range         & frac.\ in $[0.5,20]$ m $\ge 35\%$                                          & too empty / blocked \\
7 & narrow gap            & reject if opp.-ray sum $<$ 1.5 m                                           & slits / corridor ends \\
\bottomrule
\end{tabular}
\end{table}

\subsection{Greedy parameters}

\begin{table}[h]
\centering
\small
\begin{tabular}{ll}
\toprule
Parameter & Default \\
\midrule
low-resolution probe                      & 128 $\times$ 256 \\
$M_0$ center-near candidates              & 32 \\
conflict weight $\lambda$                 & 0.35 (ablated in \S\ref{sec:lambda}) \\
conflict threshold $\delta$               & 0.5\% of AABB diagonal, source-specific clamps \\
evaluation budgets                        & $K = 4, 30$ \\
production early stop $\tau$              & 1\% marginal-gain threshold \\
production early stop $m$                 & 2 consecutive sub-threshold steps \\
early stop in evaluation                  & disabled \\
\bottomrule
\end{tabular}
\end{table}

$\delta$ sensitivity: pilot runs sweep $\delta\!\in\!\{0.3\%, 0.5\%, 0.75\%, 1.0\%\}$ of the AABB diagonal.

Figure~\ref{fig:selection_geometry_full} expands the selection-geometry analysis across three curator-source examples: Blender indoor, HM3D bedroom, and ScanNet++ kitchen. Each panel plots the feasible candidate pool together with the viewpoints selected by the default $\lambda\!=\!0.35$, showing that the conflict-aware penalty spreads selections across the available candidate space rather than collapsing to a localized cluster.

\begin{figure}[h]
\centering
\includegraphics[width=\linewidth]{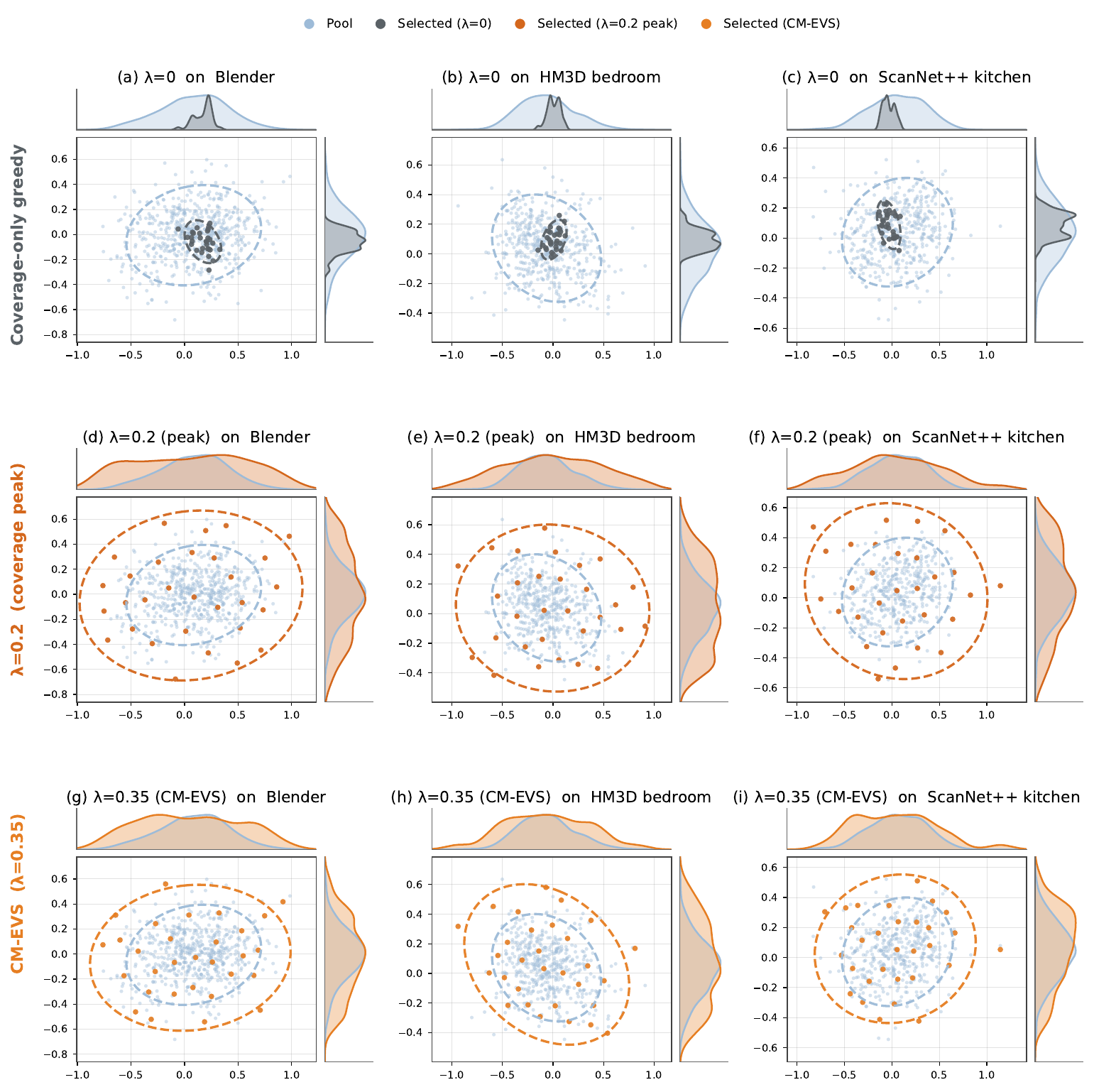}
\caption{Full selection geometry across three curator-source examples at $K\!=\!30$. All feasible candidates (light) and \textsc{COVER}-selected viewpoints (highlighted) are shown over the scene's accumulated point cloud; the selected set spreads across the pool rather than clustering in any single region.}
\label{fig:selection_geometry_full}
\end{figure}

\subsection{Adaptive frame budgets}
\label{sec:earlystop}

Evaluation experiments (\S\ref{sec:eval}) fix the budget $K$ so all baselines compare at equal frame count. Production deployments need a complementary mode: small scenes saturate well below the headline $K$, while an oversized budget wastes rendering on near-zero marginal gains. \textsc{COVER} exposes a gain-gradient early stop: terminate selection when $G_t<\tau$ for $m$ consecutive steps, with $\tau\!=\!1\%$ and $m\!=\!2$ as defaults. The early stop is disabled in all fixed-budget evaluation tables. It lets a single \textsc{COVER} pipeline scale from inspection budgets ($K\!\sim\!8$) to world-model-grade budgets ($K\!\gg\!32$), with each scene self-terminating at its own coverage saturation. The selected set always includes $v_0$ and any frames passing $\tau$; the production branch never returns fewer than 2 frames per scene. Figure~\ref{fig:gain_curves} shows the per-step marginal-gain curves underlying this behavior.

\begin{figure}[h]
\centering
\includegraphics[width=\linewidth]{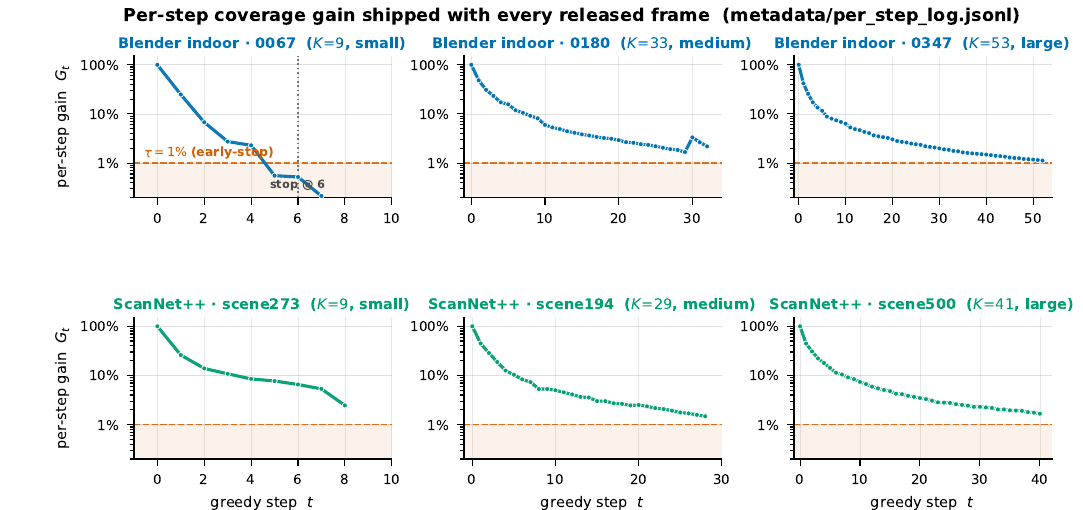}
\caption{Per-step marginal coverage gain $G_t$ across selection steps. The curves saturate at scene-specific rates; the production threshold $\tau\!=\!1\%$ (dashed) defines the gain-gradient early stop.}
\label{fig:gain_curves}
\end{figure}

\section{Quality and visual examples}
\label{app:audit}

\subsection{Per-source depth distribution}

Figure~\ref{fig:depth_violin} shows the per-source range-depth distributions across the released frames. Blender indoor frames span $0.3$--$30+$\,m, with the long tail driven by atria and large open-plan spaces; HM3D and ScanNet++ concentrate around $1.4$--$1.9$\,m (residential / scanned interiors); outdoor sources (TartanGround, OB3D) extend to tens of metres along their re-encoded source trajectories.

\begin{figure}[h]
\centering
\includegraphics[width=\linewidth]{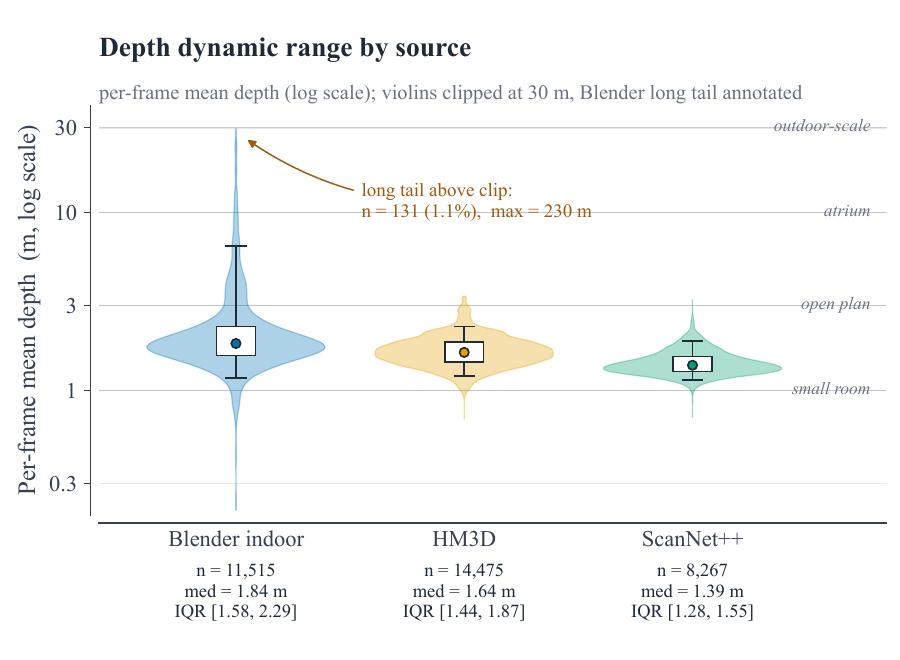}
\caption{Per-source range-depth distribution (violin) on released frames. Width reflects density at each depth; medians and 5--95\% ranges are overlaid.}
\label{fig:depth_violin}
\end{figure}

\subsection{Multi-view selection example}

Figure~\ref{fig:case_multiview} expands property (a) of \S\ref{sec:datasetanalysis}: six \textsc{COVER}-selected viewpoints on a Blender indoor residential scene span three functional zones (entryway, living area, bedroom alcove), and the right panel overlays the six camera positions on the scene's accumulated point cloud.

\begin{figure}[h]
\centering
\includegraphics[width=\linewidth]{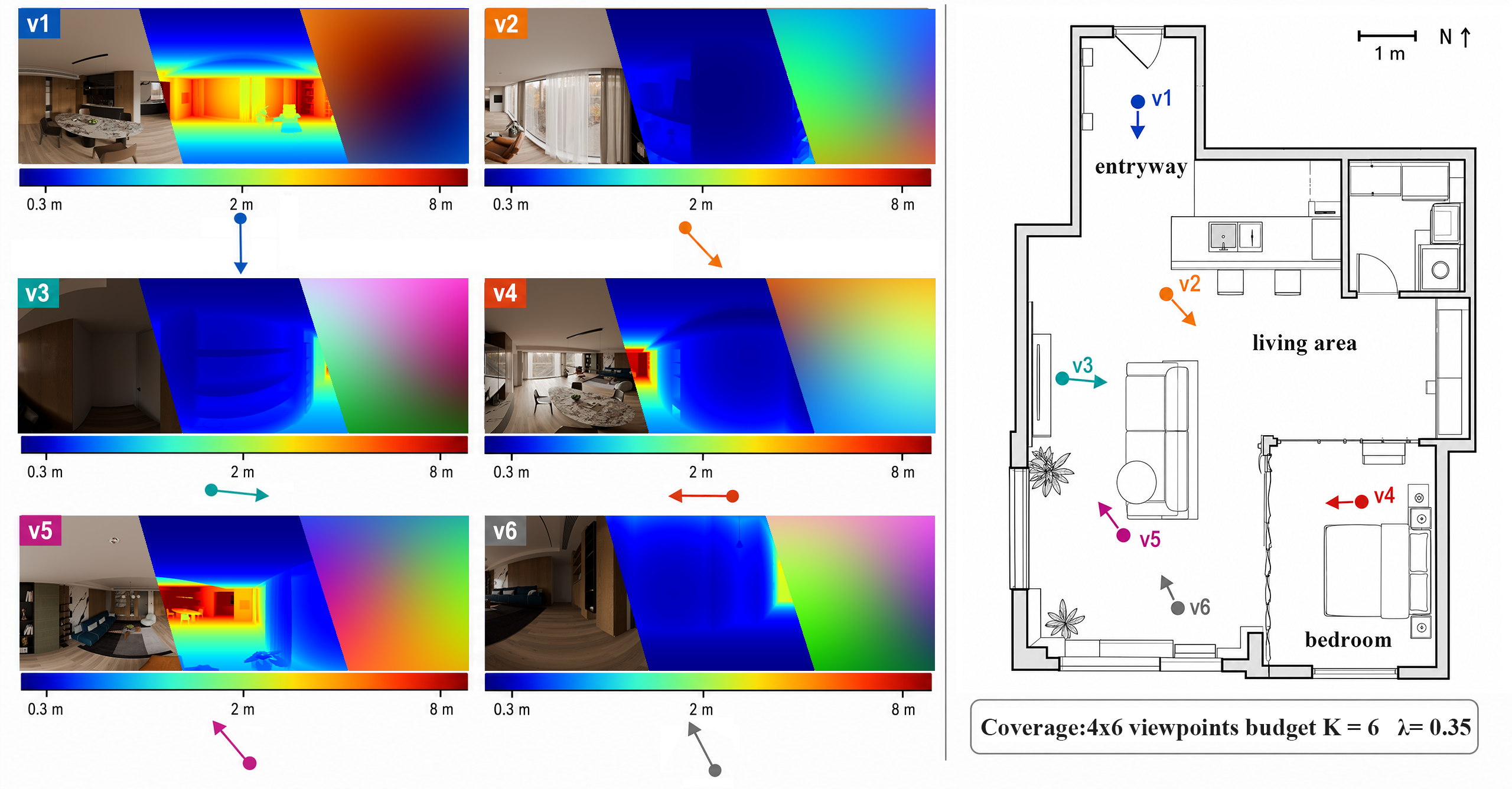}
\caption{Six \textsc{COVER}-selected viewpoints on a Blender indoor residential scene, spanning three functional zones; positions overlaid on the scene's accumulated point cloud (right).}
\label{fig:case_multiview}
\end{figure}

\subsection{Low-redundancy selection example}
\label{app:lowred}
Figure~\ref{fig:4_case_lowred} expands property (d) of \S\ref{sec:datasetanalysis}: at $K\!=\!8$ all four functional zones (reception, meeting, workstation cluster, kitchenette) of an open-plan office are covered by $t\!\approx\!6$; at $K\!=\!30$ the marginal gain drops below $\tau\!=\!1\%$ around $t\!\approx\!22$, recovering the same operating point as the production gain-gradient early stop.

\begin{figure}[h]
\centering
\includegraphics[width=\linewidth]{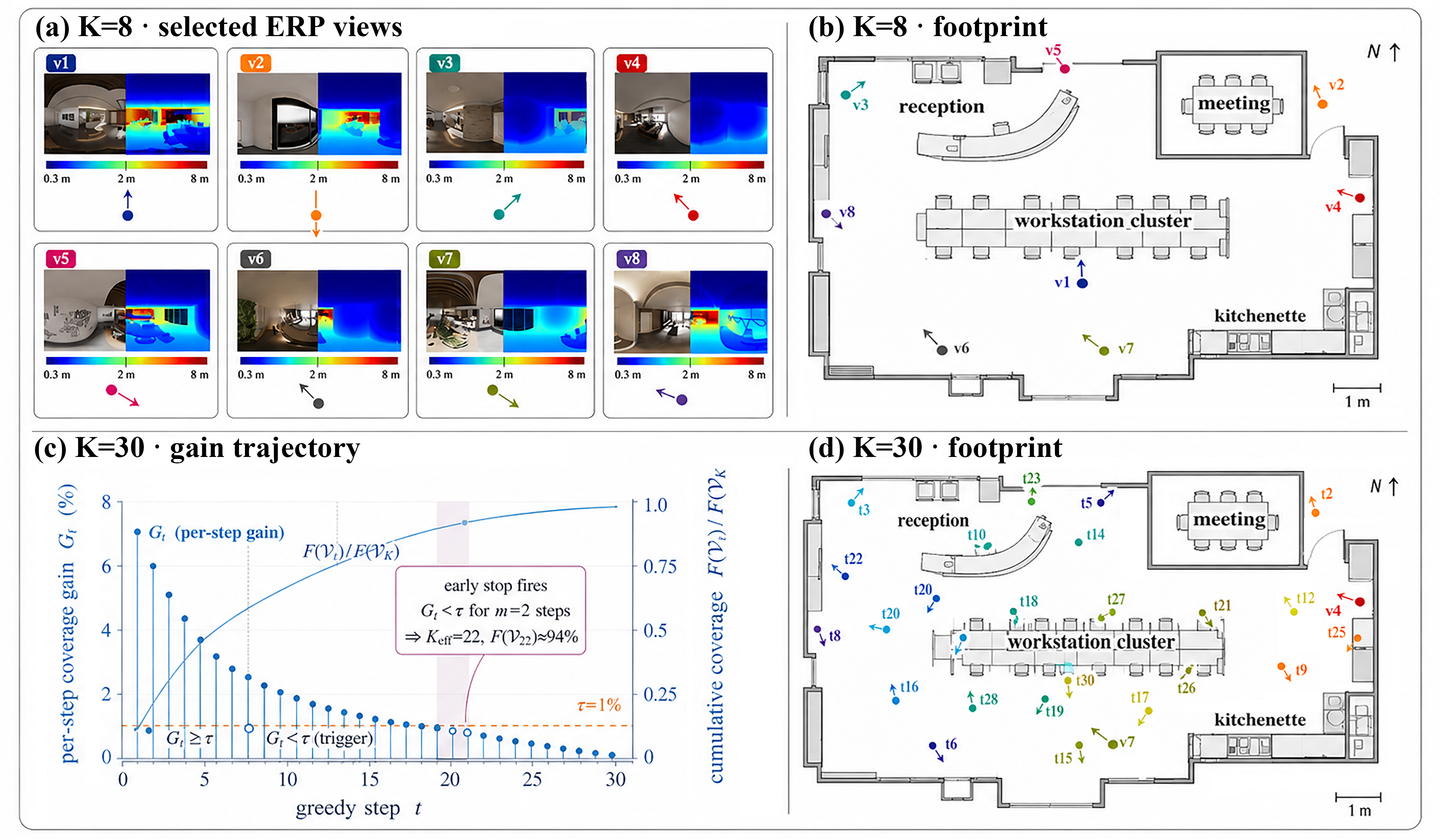}
\caption{Low-redundancy selection on an open-plan office. \textbf{Top:} $K\!=\!8$ ERP views and footprint. \textbf{Bottom:} $K\!=\!30$ footprint with $\tau\!=\!1\%$ early stop.}
\label{fig:4_case_lowred}
\end{figure}

\subsection{Failure taxonomy}
\label{app:failures}
\label{app:failure_taxonomy}
This subsection and the four that follow audit how the curator fails. We hand-classified
\textbf{50 bad cases} that the curator and its filters \emph{caught} during dataset
construction (20 Blender, 20 HM3D, 10 ScanNet++); these frames are excluded from the public
release and are kept on disk under \texttt{bad\_case/} as inspectable evidence of how the
pipeline fails. The complementary \emph{positive} audit on the released frames is in
\S\ref{app:audit_50}: ``where does the pipeline break?'' vs.\ ``what does the survivor
population look like?''

We classify the 50 bad cases into five mutually exclusive failure classes
(F1--F5; Table~\ref{tab:failure_taxonomy}). The class is colour-coded throughout
Fig.~\ref{fig:failure_gallery}.

\begin{table}[h]
\caption{Failure taxonomy. ``Caught by'' identifies the diagnostic that already flags the failure
(or that needs to be tightened in v1.1). ``Public-release exposure'' is the residual rate at
which the failure can leak into $F_{\text{pub}}$ \emph{after} the v1.0 filter chain, estimated
on the audited cohort (\S\ref{app:failure_status}). F-class colours: %
\textcolor[HTML]{D62728}{$\blacksquare$} F1, %
\textcolor[HTML]{FF7F0E}{$\blacksquare$} F2, %
\textcolor[HTML]{9467BD}{$\blacksquare$} F3, %
\textcolor[HTML]{2CA02C}{$\blacksquare$} F4, %
\textcolor[HTML]{1F77B4}{$\blacksquare$} F5.}
\label{tab:failure_taxonomy}
\centering
\small
\setlength{\tabcolsep}{3pt}
\begin{tabular}{p{0.6cm}p{2.7cm}p{3.0cm}p{3.4cm}p{3.0cm}}
\toprule
Class & Failure mode & Visual signature in ERP & Caught by & Public-release exposure \\
\midrule
F1 & Embedded / wall-flush camera   & a single object or wall fills $>40\%$ of the sphere; depth collapses to near-field & geometry filter layer 2 (inside geom.) + layer 5 (wall proximity), \S\ref{app:filter} & low: residuals are corner-of-room edge cases only \\
F2 & Mesh / scan incompleteness     & large connected regions of NaN / 0 in depth, mirrored in RGB as flat grey or black holes & post-render finite-depth ratio threshold & low--medium: large holes filtered; small holes pass \\
F3 & Reconstruction artifact        & ``melted'' geometry, point-cloud splatter spheres, ghost blobs in mid-air & adapter mode flag (point cloud) + post-render validity & medium: ScanNet++ point-cloud adapter is intrinsically lossy \\
F4 & Material / lighting failure    & magenta NaN-texture, fully black ``no-light'' scenes, flat gray emission shaders & Cycles render-time NaN / luminance check; manual asset triage & low (synthetic only): cleaned at asset-import time \\
F5 & Out-of-scene / semi-outdoor    & sky / ocean / atrium dome dominates; pose escapes AABB & geometry filter layer 1 (vertical hits) + sky-visibility diagnostic & low: residuals are atria with high but bounded ceilings \\
\bottomrule
\end{tabular}
\end{table}

The legacy diagnostic-centric catalogue (geometry filter false rejection, post-filter false
acceptance, space-unit proposal error, point-cloud-only degradation, indoor/outdoor ambiguity)
maps onto F1--F5 above; we keep the F1--F5 names because they are visual-evidence-based and
align with the gallery in Fig.~\ref{fig:failure_gallery}.

\subsection{Per-source bad-case rate}
\label{app:failure_rate}

Table~\ref{tab:failure_rate} reports the audited bad-case count per source against the v1.0
unit count. The rate is the fraction of source units that reach the manual-review queue at all
(i.e.\ frames that survive the automated filter chain but are still flagged by the
post-filter validity pass or by manual sampling). The release excludes every flagged case;
the table is therefore an upper bound on what would have leaked into $F_{\text{pub}}$ without
the post-render validity gate.

\begin{table}[h]
\caption{Per-source bad-case audit. Rates are over the v1.0 unit count, not over individual
frames. F-class breakdown matches the legend of Table~\ref{tab:failure_taxonomy};
visual evidence is in Fig.~\ref{fig:failure_gallery}.}
\label{tab:failure_rate}
\centering
\small
\setlength{\tabcolsep}{4pt}
\begin{tabular}{lrrrccccc}
\toprule
Source & Units in v1.0 & Bad cases & Rate & F1 & F2 & F3 & F4 & F5 \\
\midrule
Blender indoor & 374 scenes  & 20 & 5.3\% & 2 & 3 & 0  & 11 & 4 \\
HM3D           & 401 rooms   & 20 & 5.0\% & 9 & 7 & 3  & 0  & 1 \\
ScanNet++      & 500 scans   & 10 & 2.0\% & 0 & 0 & 10 & 0  & 0 \\
\midrule
\textbf{Total} & 1{,}275      & \textbf{50} & \textbf{3.9\%} & \textbf{11} & \textbf{10} & \textbf{13} & \textbf{11} & \textbf{5} \\
\bottomrule
\end{tabular}
\end{table}

The per-source profile is \emph{disjoint by class}, which is what we expect from an
adapter-of-adapters design: F4 (material / lighting) appears only in the synthetic Blender
path; F3 (reconstruction artifact) is dominated by ScanNet++ where the point-cloud adapter
must rasterise sparse point sets into ERP frames; F1 / F2 (embedding and scan holes)
concentrate in HM3D where the upstream mesh quality is the limiting factor.

\subsection{Failure gallery}
\label{app:failure_gallery}

For every audited bad case we show the ERP RGB on top and the corresponding range-depth
(turbo colormap; black = NaN / 0) immediately below. Pairing RGB with depth is essential
because some failures (F4) leave the depth buffer perfectly valid even though the RGB
is unusable, while others (F2, F3) cause matching holes in both buffers. The cell border
colour and the corner tag identify the F-class.

\begin{figure}[p]
\centering
\includegraphics[height=0.95\textheight]{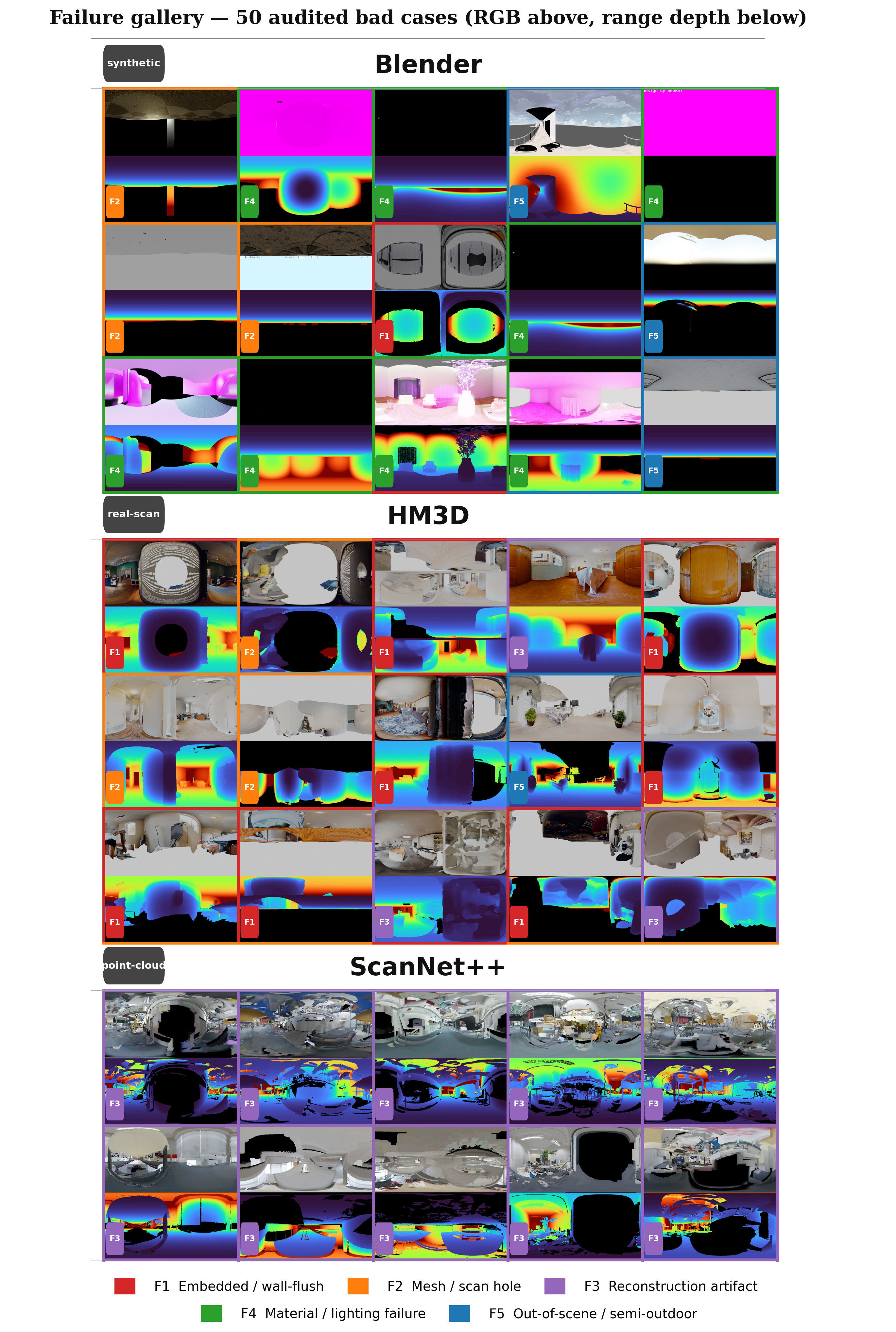}
\caption{Audited 50-bad-case gallery (Blender top, HM3D middle, ScanNet++ bottom). Each cell
shows the ERP RGB above the range depth at native 2:1; cell border colour and F1--F5 tag
follow Table~\ref{tab:failure_taxonomy}. Every cell shown is \emph{excluded} from the public
release.}
\label{fig:failure_gallery}
\end{figure}

\subsection{Resolution status and v1.1 roadmap}
\label{app:failure_status}

Table~\ref{tab:failure_taxonomy} divides the failures by where the responsibility lies. The
v1.0 release \emph{already excludes} every audited bad case; this subsection records what we
plan to upstream into the curator so that future releases catch them earlier in the pipeline.

\begin{itemize}
\item \textbf{F1, F5}: addressed by the existing 26-direction filter
(\S\ref{app:filter}). The 50 cases shown here are by construction the residuals that
slip past it. The v1.1 plan tightens layer 5 (wall proximity) and adds an outdoor
sky-visibility variant of layer 1 to the indoor curator path.
\item \textbf{F2}: caught by the post-render finite-depth-ratio threshold; the Blender F2 cases
in Fig.~\ref{fig:failure_gallery} demonstrate that asset-side mesh holes are flagged at render
time. v1.1 will additionally surface the per-frame invalid-pixel ratio in
\texttt{metadata/per\_step\_log.jsonl} so reviewers can sort by it.
\item \textbf{F3}: structural to the ScanNet++ point-cloud adapter. v1.1 will offer an
optional mesh-fallback path (subdivision-surface reconstruction from the point cloud) at the
cost of one extra preprocessing step per scan; meanwhile the adapter mode flag is exposed
in metadata so that downstream users can opt out of point-cloud-derived frames.
\item \textbf{F4}: an asset-import-time failure rather than a curator failure. The cases shown
here are the Blender assets that survived the import-time NaN / luminance check but still
produced unusable renders. v1.1 adds a post-render colour-histogram sanity check (rejects
$>20\%$ pure-magenta or pure-black pixels).
\end{itemize}

The 50 audited bad cases ship with the release in
\texttt{bad\_case/\{HM3D,blender,scannetpp\}/}, each containing the full ERP frame
sequence, range-depth maps, and pose JSONs that produced the failure. They are intended to be
re-runnable: an external user can re-pose the curator on the same upstream asset and
reproduce the failure.

\subsection{50-frame quality audit}
\label{app:audit_50}

The complement of the failure gallery is a positive audit on the public release.
We audit a random sample of 50 public Blender selected frames (sized to be inspectable by a
single reviewer in under one hour). \texttt{scripts/audit\_quality.py} applies the same
automated checks to the full $F_{\text{pub}}$ release. Every failure mode that appears in
Fig.~\ref{fig:failure_gallery} is caught \emph{before} the audited frames are sampled, which
is why the audit returns clean.

\begin{table}[h]
\caption{50-frame quality audit on a random sample of public Blender indoor selected frames. Automated pass rates also hold over the full $F_{\text{pub}}$ release (\texttt{audit\_quality.py} reports identical checksums and finite-depth statistics). Embedded-camera artifacts (F1) are filtered upstream by the 26-direction geometry filter (Appendix~\ref{app:filter}, layers 2 and 5); residual visual evidence on the audited cohort is in the failure gallery (Fig.~\ref{fig:failure_gallery}).}
\label{tab:audit}
\centering
\small
\begin{tabular}{p{5.5cm}ccp{4.5cm}}
\toprule
Audit item & $n$ & pass rate & notes \\
\midrule
RGB / depth files load and align             & 50 & 100\% (50/50) & SHA256 checksum verified \\
pose JSON parses and matches schema          & 50 & 100\% (50/50) & scalar-first $q_{wc}$, position, \texttt{camera\_type} present (\S\ref{sec:schema}) \\
finite-depth ratio passes validity threshold & 50 & 100\% (50/50) & all $<$ 90\% invalid; full-release median 1.4\% (\S\ref{sec:datasetanalysis}) \\
ERP coordinate convention round-trip         & 50 & 100\% (50/50) & longitude / latitude unprojection re-projects within 0.5\,px \\
\bottomrule
\end{tabular}
\end{table}

All audited failures are linked back to candidate diagnostics. The goal is not zero failures; it is \emph{inspectable, reproducible} failures.

\section{Warping oracle empirical validation}
\label{app:oracle}

\textsc{COVER} replaces an exact pre-render-all oracle with a cheap warping
proxy (\S\ref{sec:warping}). Lemma~\ref{lem:noisy} bounds the resulting
coverage cost by an additive $2\sum_t(\epsilon_t+\lambda\gamma_t)$. We
empirically measure $\epsilon_t$ and the realised coverage gap on a 31-scene
Blender indoor pool by running both oracles under the same candidate set and
seed (12{,}711 candidate--step datapoints, 389 selection steps).

Table~\ref{tab:oracle_gap} reports six diagnostics. The pre-render-all oracle
is the deterministic upper reference. The warping oracle exhibits a
measurable per-step proxy error
($\bar{\epsilon}=0.4254\pm0.2223$) and a low strict
top-1 agreement ($0.023\pm0.150$, Pearson $r=0.148$ with $p\!<\!10^{-60}$,
Spearman $\rho=0.366$ with $p\!\approx\!0$), yet pays only an
$8.10\pm5.50$ percentage points coverage gap at the end of $K$ steps. This
matches the $\lambda$ plateau of \S\ref{sec:lambda}: conflict-aware
re-ranking absorbs oracle noise as long as it stays inside the bounded
plateau, which is exactly the regime Lemma~\ref{lem:noisy} predicts.

The compute price tag is the practical headline: warping replaces $1.74$
GPU-hours of full-resolution Cycles rendering with $0.014$ GPU-hours of
probe rendering, a $\mathbf{133.4\pm17.2\times}$ wall-clock speed-up under
the same hardware (\S\ref{app:hardware}). Without the warping proxy, a
fixed-budget $K$ selection over $|\mathcal{P}_\varphi|$ candidates would
require $|\mathcal{P}_\varphi|/K$ extra full-resolution renders per step,
which is the cost barrier that motivates the noisy-oracle design in the
first place.

\begin{table}[h]
\caption{Warping oracle vs.\ pre-render-all on 31 Blender indoor scenes
(12{,}711 candidate--step datapoints, 389 selection steps). Pre-render-all
is the exact reference; \textsc{COVER}'s warping oracle pays a small
final-coverage gap in exchange for a $133\times$ wall-clock speed-up,
empirically validating the noisy-oracle assumption of Lemma~\ref{lem:noisy}.}
\label{tab:oracle_gap}
\centering
\small
\setlength{\tabcolsep}{2pt}
\begin{tabular}{lcccccc}
\toprule
Oracle & Gain corr.\ (P\,/\,S) & Top-1 agree. & Cov.\ gap & Proxy err.\ $\bar{\epsilon}$ & GPU-h & Speed-up \\
\midrule
Pre-render-all          & $1.00\,/\,1.00$  & $1.00$            & $0$                   & $0$                    & $1.74$  & $1\times$ \\
\textsc{COVER} warping  & $0.148\,/\,0.366$ & $0.023\pm0.150$  & $0.0810\pm0.0550$    & $0.4254\pm0.2223$     & $0.014$ & $\mathbf{133.4\pm17.2\times}$ \\
\bottomrule
\end{tabular}
\end{table}

\section{Proof of Lemma~\ref{lem:noisy}}
\label{app:proof}

\begin{proof}
\textbf{Setup.} Let $\mathcal{V}_{t-1}$ be the selected set before step $t$, $\Delta_t(v)=\Delta f(v\mid\mathcal{V}_{t-1})$ the true marginal coverage, and $\widehat\Delta_t(v)=G_t(v)$ the warping-oracle proxy; the noisy-oracle assumption is $|\widehat\Delta_t(v)-\Delta_t(v)|\le\epsilon_t+\eta L_t(v)$.

\textbf{Per-step inequality.} Let $u_t$ be the candidate selected by \textsc{COVER}, and let $v_t^*\!\in\!\arg\max_v\Delta_t(v)$ with $L_t(v_t^*)=\gamma_t$. Since $u_t$ maximizes $s_t(v)=\widehat\Delta_t(v)-\lambda L_t(v)$,
\[
\widehat\Delta_t(u_t)-\lambda L_t(u_t) \;\ge\; \widehat\Delta_t(v_t^*)-\lambda L_t(v_t^*).
\]
Using the noisy-oracle bound on both candidates and $\lambda\ge\eta$,
\begin{align*}
\Delta_t(u_t)
  &\ge \widehat\Delta_t(u_t) - \epsilon_t - \eta L_t(u_t)\\
  &\ge \widehat\Delta_t(v_t^*) - \lambda L_t(v_t^*) + \lambda L_t(u_t) - \epsilon_t - \eta L_t(u_t)\\
  &\ge \Delta_t(v_t^*) - 2\epsilon_t - (\lambda+\eta) L_t(v_t^*) + (\lambda-\eta) L_t(u_t)\\
  &\ge \Delta_t(v_t^*) - 2\epsilon_t - 2\lambda\gamma_t .
\end{align*}

\textbf{Telescoping.} For monotone submodular $f$ with cardinality $K$, the best true marginal candidate satisfies $\Delta_t(v_t^*)\ge(f(\mathcal{V}^*)-f(\mathcal{V}_{t-1}))/K$. Substituting the per-step inequality gives
\[
f(\mathcal{V}^*)-f(\mathcal{V}_t) \le \Big(1-\tfrac{1}{K}\Big)\big(f(\mathcal{V}^*)-f(\mathcal{V}_{t-1})\big) + 2\epsilon_t + 2\lambda\gamma_t .
\]
Unrolling for $K$ steps and using $(1-1/K)^K\le e^{-1}$,
\[
f(\mathcal{V}_K)\ge(1-1/e)\,f(\mathcal{V}^*)-\sum_{t=1}^K\big(2\epsilon_t+2\lambda\gamma_t\big).
\]
The tighter telescoped form replaces the sum by $\sum_{t=1}^K(1-1/K)^{K-t}(2\epsilon_t+2\lambda\gamma_t)$, which is no larger and does not change the qualitative guarantee.
\end{proof}

\paragraph{Remark (two extreme cases).} (a) If oracle-best candidates have low conflict, $\gamma_t\!\approx\!0$ and the bound reduces to the coverage-only noisy-oracle result. (b) If they are themselves conflicted, the $2\lambda\gamma_t$ term is the worst-case coverage cost of avoiding unstable proxy regions.

\section{Extended related work}
\label{app:related-work}

\textbf{3D scene resources.} Matterport3D \citep{chang2017matterport3d}, ScanNet \citep{dai2017scannet}, ScanNet++ \citep{yeshwanth2023scannetpp}, ARKitScenes \citep{dehghan2021arkitscenes}, HM3D \citep{ramakrishnan2021hm3d}, and Replica \citep{straub2019replica} are the main 3D scene resources we build on. We treat them as \emph{inputs} to CM-EVS rather than competitors: HM3D and ScanNet++ are exposed through adapters, and we do not claim a new 3D scene collection. Hypersim \citep{roberts2021hypersim}, Kubric \citep{greff2022kubric}, Infinigen \citep{raistrick2023infinigen}, 3D-FRONT \citep{fu20213dfront}, and Structured3D \citep{zheng2020structured3d} couple their viewpoint policies to the rendering backend; CM-EVS instead exposes the policy as a controlled, fixed-budget curator that runs through \texttt{.blend}, \texttt{.glb}, and \texttt{.ply} adapters. Embodied-AI scene platforms such as Gibson \citep{xia2018gibson} and iGibson \citep{shen2021igibson} similarly treat camera trajectories as a downstream simulator concern rather than an auditable release-time artifact.

\textbf{Panoramic NeRF, Gaussian splatting, and NVS.} Recent panoramic NeRF \citep{mildenhall2020nerf} and Gaussian-splatting \citep{kerbl20233dgs} works, including 360-GS \citep{bai2024360gs}, EgoNeRF \citep{choi2023egonerf}, OmniNeRF \citep{gu2022omninerf}, panoramic radiance fields (PERF \citep{wang2024perf}, PanoGRF \citep{chen2023panogrf}, 360Roam \citep{huang2022360roam}), and panoramic-GS pipelines (e.g., DreamScene360 \citep{zhou2024dreamscene360}), use ERP input for panoramic novel-view synthesis, but each paper assembles its own training data: hand-held panoramic captures (e.g., OmniPhotos \citep{bertel2020omniphotos}), per-paper Blender renders, or repurposed indoor-scan panoramas. A public, license-clean ERP RGB-D-pose corpus with calibrated poses, ground-truth depth, and a unified coordinate convention is missing. CM-EVS contributes on the supply side and does not propose a new NVS model.

\textbf{Panoramic generation.} Pano3D \citep{albanis2021pano3d}, 360DVD \citep{wang2024360dvd}, DiffPano \citep{ye2024diffpano}, Matrix-3D \citep{zhang2025matrix3d}, MVDiffusion \citep{tang2023mvdiffusion}, Text2Light \citep{chen2022text2light}, PanFusion \citep{zhang2024panfusion}, and PanoDiffusion \citep{wu2024panodiffusion} synthesize ERP content from text, noise, or scripted trajectories. They are complementary to CM-EVS, which provides geometrically consistent ERP RGB-D-pose supervision drawn from explicit 3D assets and can be used to evaluate or pretrain panoramic-generation models.

\textbf{View planning and next-best-view.} Classical NBV literature \citep{connolly1985nbv,massios1998nbv,vasquez2014volumetric}, set-cover view planning (SCVP) \citep{pan2022scvp}, and recent active-NeRF / learned-NBV approaches (e.g., GenNBV \citep{chen2024gennbv}, ActiveNeRF \citep{pan2022activenerf}, NeurAR \citep{ran2023neurar}) target online reconstruction, inspection, or robot exploration. \textsc{COVER} sits in a different regime: offline, training-free, fixed-budget input-view selection over many existing scenes, with the policy exposed as an auditable artifact rather than embedded in an online robot loop.

\textbf{Dataset documentation practice.} Following Datasheets for Datasets \citep{gebru2021datasheets}, the release ships with a complete Datasheet (\S\ref{app:datasheet}), Croissant metadata \citep{mlcommons2024croissant} (\S\ref{app:croissant}), license matrix (\S\ref{app:license}), and a 50-frame audit (\S\ref{app:audit}). Prior panoramic resources treat the camera-policy variable as implicit; we expose it as an auditable, citable field.

\textbf{Position relative to existing resources.} Table~\ref{tab:position} summarizes how CM-EVS differs from prior panoramic and 3D-scene resources along the dimensions that matter for fixed-budget viewpoint comparison.

\begin{center}
\captionof{table}{Position of CM-EVS relative to existing resources. \CIRCLE\,= yes, \Circle\,= no, \LEFTcircle\,= partial. \textbf{Frames/scene} captures how aggressively a corpus emits viewpoints per scene unit; lower numbers, when matched against scene-type coverage and modality completeness, indicate redundancy control rather than data scarcity. CM-EVS sits one-to-two orders of magnitude below ERP corpora that emit a fixed-per-scene budget. ``Audit. policy'' marks whether the viewpoint selection policy is exposed as an auditable, citable artifact (per-step logs of $G_t$, $L_t$, $s_t$ and the candidate set), rather than embedded in a per-paper subsampling script.}
\label{tab:position}
\small
\setlength{\tabcolsep}{4pt}
\resizebox{\linewidth}{!}{%
\begin{tabular}{llccccl}
\toprule
Resource & 3D source & ERP RGB-D-pose & Frames / scene & Audit. policy & Fixed-$K$ bench. & Direct release \\
\midrule
Matterport3D            & real-scan mesh           & partial             & $\sim$120 & \Circle  & \Circle  & gated \\
ScanNet++               & real-scan mesh/cloud     & perspective         & $\sim$3{,}960 & \Circle  & \Circle  & gated \\
HM3D                    & real-scan mesh           & user-rendered       & user-set & \Circle  & \Circle  & gated \\
Hypersim                & synthetic                & perspective         & 168 & \Circle  & \Circle  & public \\
Kubric / Infinigen      & synthetic / procedural   & user-rendered       & user-set & \LEFTcircle & \Circle & public gen. \\
360DVD                  & scripted video           & ERP RGB only        & 100 & \Circle  & \Circle  & mixed \\
Matrix-3D / Matrix-Pano & generative / scripted    & ERP RGB(-D)         & $\sim$138 & \Circle  & \Circle  & mixed \\
\textbf{CM-EVS (ours)}  & \texttt{.blend/.glb/.ply} & \textbf{ERP RGB-D-pose} & \textbf{16--37 (median by source)} & \CIRCLE & \CIRCLE & CC-BY 4.0 + regen. \\
\bottomrule
\end{tabular}}
\end{center}

\clearpage
\section*{NeurIPS Paper Checklist}

\begin{enumerate}

\item {\bf Claims}
    \item[] Question: Do the main claims made in the abstract and introduction accurately reflect the paper's contributions and scope?
    \item[] Answer: \answerYes{}
    \item[] Justification: The abstract and \S\ref{sec:intro} state three contributions that are each substantiated in the paper: (i) a training-free, depth-conflict-aware ERP viewpoint curator (COVER) with a noisy-oracle approximation guarantee (Lemma~\ref{lem:noisy}, \S\ref{sec:warping}; full proof in Appendix~\ref{app:proof}); (ii) the CM-EVS dataset of 36{,}373 curator-produced ERP RGB-D-pose frames across 1{,}275 indoor scenes (Blender indoor, HM3D, ScanNet++) plus re-encoded outdoor frames from TartanGround and OB3D under a unified schema (\S\ref{sec:data}, Table~\ref{tab:4_composition}); (iii) coverage and depth-conflict experiments showing that conflict-aware selection improves the coverage--conflict trade-off on shared candidate pools across sources (\S\ref{sec:eval}).
    \item[] Guidelines:
    \begin{itemize}
        \item The answer \answerNA{} means that the abstract and introduction do not include the claims made in the paper.
        \item The abstract and/or introduction should clearly state the claims made, including the contributions made in the paper and important assumptions and limitations. A \answerNo{} or \answerNA{} answer to this question will not be perceived well by the reviewers.
        \item The claims made should match theoretical and experimental results, and reflect how much the results can be expected to generalize to other settings.
        \item It is fine to include aspirational goals as motivation as long as it is clear that these goals are not attained by the paper.
    \end{itemize}

\item {\bf Limitations}
    \item[] Question: Does the paper discuss the limitations of the work performed by the authors?
    \item[] Answer: \answerYes{}
    \item[] Justification: A dedicated ``Limitations and Future Work'' paragraph at the end of \S\ref{sec:conclusion}. It states that (a) our evaluation focuses on the curator layer (coverage and depth-conflict statistics under shared candidate pools) rather than downstream task accuracy on ERP depth estimation, novel-view synthesis, or world-model pretraining; (b) frames derived from licensed sources (HM3D, ScanNet++) are not redistributed and must be regenerated locally under the original access terms via the released adapters; (c) outdoor frames are re-encoded source trajectories rather than curator-selected subsets, so they carry the unified schema but not the per-step provenance log.
    \item[] Guidelines:
    \begin{itemize}
        \item The answer \answerNA{} means that the paper has no limitation while the answer \answerNo{} means that the paper has limitations, but those are not discussed in the paper.
        \item The authors are encouraged to create a separate ``Limitations'' section in their paper.
        \item The paper should point out any strong assumptions and how robust the results are to violations of these assumptions (e.g., independence assumptions, noiseless settings, model well-specification, asymptotic approximations only holding locally). The authors should reflect on how these assumptions might be violated in practice and what the implications would be.
        \item The authors should reflect on the scope of the claims made, e.g., if the approach was only tested on a few datasets or with a few runs. In general, empirical results often depend on implicit assumptions, which should be articulated.
        \item The authors should reflect on the factors that influence the performance of the approach. For example, a facial recognition algorithm may perform poorly when image resolution is low or images are taken in low lighting. Or a speech-to-text system might not be used reliably to provide closed captions for online lectures because it fails to handle technical jargon.
        \item The authors should discuss the computational efficiency of the proposed algorithms and how they scale with dataset size.
        \item If applicable, the authors should discuss possible limitations of their approach to address problems of privacy and fairness.
        \item While the authors might fear that complete honesty about limitations might be used by reviewers as grounds for rejection, a worse outcome might be that reviewers discover limitations that aren't acknowledged in the paper. The authors should use their best judgment and recognize that individual actions in favor of transparency play an important role in developing norms that preserve the integrity of the community. Reviewers will be specifically instructed to not penalize honesty concerning limitations.
    \end{itemize}

\item {\bf Theory assumptions and proofs}
    \item[] Question: For each theoretical result, does the paper provide the full set of assumptions and a complete (and correct) proof?
    \item[] Answer: \answerYes{}
    \item[] Justification: The single theoretical result is Lemma~\ref{lem:noisy} (\S\ref{sec:warping}), which extends the standard $(1-1/e)$ greedy guarantee for monotone submodular maximization to the noisy-oracle, conflict-penalized setting. All assumptions are explicit in the lemma statement: monotone submodular coverage $f$, noisy-oracle bound $|\widehat\Delta_t(v)-\Delta_t(v)|\le \epsilon_t + \eta L_t(v)$, conflict weight $\lambda \ge \eta$, and oracle-best conflict bound $\gamma_t = L_t(v_t^\star)$. The complete proof, including the per-step inequality, the telescoping argument, and the tighter telescoped form, is given in Appendix~\ref{app:proof}.
    \item[] Guidelines:
    \begin{itemize}
        \item The answer \answerNA{} means that the paper does not include theoretical results.
        \item All the theorems, formulas, and proofs in the paper should be numbered and cross-referenced.
        \item All assumptions should be clearly stated or referenced in the statement of any theorems.
        \item The proofs can either appear in the main paper or the supplemental material, but if they appear in the supplemental material, the authors are encouraged to provide a short proof sketch to provide intuition.
        \item Inversely, any informal proof provided in the core of the paper should be complemented by formal proofs provided in appendix or supplemental material.
        \item Theorems and Lemmas that the proof relies upon should be properly referenced.
    \end{itemize}

    \item {\bf Experimental result reproducibility}
    \item[] Question: Does the paper fully disclose all the information needed to reproduce the main experimental results of the paper to the extent that it affects the main claims and/or conclusions of the paper (regardless of whether the code and data are provided or not)?
    \item[] Answer: \answerYes{}
    \item[] Justification: The full curator pipeline is specified at the algorithmic level: Algorithm~\ref{alg:greedy} for the conflict-aware budgeted greedy view selection, and the per-layer predicate table in Appendix~\ref{app:filter} for the 26-spherical-plus-2-vertical-ray geometric sanity filter. All hyperparameters --- candidate-grid spacing, indoor and extra-high height layers, candidate cap, conflict weight $\lambda$, conflict threshold $\delta$, evaluation budgets $K \in \{8, 16, 24, 32\}$, probe resolution $128\times256$, seed-pool size $M_0$, and production early-stop $(\tau, m)$ --- are listed in Appendix~\ref{app:hyperparams}. Anonymous code, a tiny example scene, the Datasheet (Appendix~\ref{app:datasheet}), the Croissant manifest (Appendix~\ref{app:croissant}), and SHA256 manifests are released through the anonymous review URLs declared in the dataset card; the released README provides the exact CLI commands for the candidate, selection, render, and coverage-evaluation stages.
    \item[] Guidelines:
    \begin{itemize}
        \item The answer \answerNA{} means that the paper does not include experiments.
        \item If the paper includes experiments, a \answerNo{} answer to this question will not be perceived well by the reviewers: Making the paper reproducible is important, regardless of whether the code and data are provided or not.
        \item If the contribution is a dataset and\slash or model, the authors should describe the steps taken to make their results reproducible or verifiable.
        \item Depending on the contribution, reproducibility can be accomplished in various ways. For example, if the contribution is a novel architecture, describing the architecture fully might suffice, or if the contribution is a specific model and empirical evaluation, it may be necessary to either make it possible for others to replicate the model with the same dataset, or provide access to the model. In general. releasing code and data is often one good way to accomplish this, but reproducibility can also be provided via detailed instructions for how to replicate the results, access to a hosted model (e.g., in the case of a large language model), releasing of a model checkpoint, or other means that are appropriate to the research performed.
        \item While NeurIPS does not require releasing code, the conference does require all submissions to provide some reasonable avenue for reproducibility, which may depend on the nature of the contribution. For example
        \begin{enumerate}
            \item If the contribution is primarily a new algorithm, the paper should make it clear how to reproduce that algorithm.
            \item If the contribution is primarily a new model architecture, the paper should describe the architecture clearly and fully.
            \item If the contribution is a new model (e.g., a large language model), then there should either be a way to access this model for reproducing the results or a way to reproduce the model (e.g., with an open-source dataset or instructions for how to construct the dataset).
            \item We recognize that reproducibility may be tricky in some cases, in which case authors are welcome to describe the particular way they provide for reproducibility. In the case of closed-source models, it may be that access to the model is limited in some way (e.g., to registered users), but it should be possible for other researchers to have some path to reproducing or verifying the results.
        \end{enumerate}
    \end{itemize}

\item {\bf Open access to data and code}
    \item[] Question: Does the paper provide open access to the data and code, with sufficient instructions to faithfully reproduce the main experimental results, as described in supplemental material?
    \item[] Answer: \answerYes{}
    \item[] Justification: Anonymous artifacts are released for review: an anonymous Hugging Face dataset repository for CM-EVS frames (Datasheet entry in Appendix~\ref{app:datasheet}, ``Distribution and licensing'') and an anonymous Hugging Face code repository for the curator. Blender indoor frames are released under CC-BY 4.0; the curator code is MIT; the complete license matrix for every component is in Appendix~\ref{app:license}. The Croissant manifest (Appendix~\ref{app:croissant}) and SHA256 manifests ship with the release. A complete sample scene (\texttt{sence\_indoor\_0001}, all 99 files) lets reviewers verify schema, coordinates, depth, and pose without downloading the full corpus, and the released README lists the exact commands required to reproduce the candidate, selection, render, and coverage-evaluation stages on the tiny example.
    \item[] Guidelines:
    \begin{itemize}
        \item The answer \answerNA{} means that paper does not include experiments requiring code.
        \item Please see the NeurIPS code and data submission guidelines (\url{https://neurips.cc/public/guides/CodeSubmissionPolicy}) for more details.
        \item While we encourage the release of code and data, we understand that this might not be possible, so \answerNo{} is an acceptable answer. Papers cannot be rejected simply for not including code, unless this is central to the contribution (e.g., for a new open-source benchmark).
        \item The instructions should contain the exact command and environment needed to run to reproduce the results. See the NeurIPS code and data submission guidelines (\url{https://neurips.cc/public/guides/CodeSubmissionPolicy}) for more details.
        \item The authors should provide instructions on data access and preparation, including how to access the raw data, preprocessed data, intermediate data, and generated data, etc.
        \item The authors should provide scripts to reproduce all experimental results for the new proposed method and baselines. If only a subset of experiments are reproducible, they should state which ones are omitted from the script and why.
        \item At submission time, to preserve anonymity, the authors should release anonymized versions (if applicable).
        \item Providing as much information as possible in supplemental material (appended to the paper) is recommended, but including URLs to data and code is permitted.
    \end{itemize}

\item {\bf Experimental setting/details}
    \item[] Question: Does the paper specify all the training and test details (e.g., data splits, hyperparameters, how they were chosen, type of optimizer) necessary to understand the results?
    \item[] Answer: \answerYes{}
    \item[] Justification: The default scene-level $70/15/15$ split with same-scene / same-space-unit grouping is documented in the Datasheet (Appendix~\ref{app:datasheet}, ``Splits''). All curator hyperparameters (candidate-grid spacing, height layers, $K$ budget, $\lambda$, $\delta$, probe resolution, $M_0$, early-stop $(\tau, m)$) are listed in Appendix~\ref{app:hyperparams}; their selection rationale and a $\delta$ sensitivity sweep are described alongside. Baselines in \S\ref{sec:eval} share the same candidate set and the same seed $v_0$ so that coverage gains are not inflated by seed choice; this is stated explicitly in Algorithm~\ref{alg:greedy}.
    \item[] Guidelines:
    \begin{itemize}
        \item The answer \answerNA{} means that the paper does not include experiments.
        \item The experimental setting should be presented in the core of the paper to a level of detail that is necessary to appreciate the results and make sense of them.
        \item The full details can be provided either with the code, in appendix, or as supplemental material.
    \end{itemize}

\item {\bf Experiment statistical significance}
    \item[] Question: Does the paper report error bars suitably and correctly defined or other appropriate information about the statistical significance of the experiments?
    \item[] Answer: \answerNo{}
    \item[] Justification: The fixed-budget tables in \S\ref{sec:eval} (Table~\ref{tab:5_cov_pilot} on a single Blender indoor scene at $K\!=\!4$, Table~\ref{tab:lambda} reporting the $\lambda$ sweep on a 10-scene Blender pool at $K\!=\!30$, and Table~\ref{tab:cross} reporting the cross-source operating point on 10 scenes per source) report point estimates of coverage and conflict statistics rather than error bars, because the curator is deterministic given the candidate set and seed and the comparisons are designed to expose qualitative collapse modes (coverage-only collapse at $\lambda\!=\!0$, the wide stable plateau on $\lambda\!\in\![0.1,0.5]$, and the consistent operating point under a $7\times$ cross-source spread in conflict prior). Multi-seed bootstrapped intervals on these comparisons, together with downstream-task error bars, are deferred to follow-up work as discussed in the Limitations section.
    \item[] Guidelines:
    \begin{itemize}
        \item The answer \answerNA{} means that the paper does not include experiments.
        \item The authors should answer \answerYes{} if the results are accompanied by error bars, confidence intervals, or statistical significance tests, at least for the experiments that support the main claims of the paper.
        \item The factors of variability that the error bars are capturing should be clearly stated (for example, train/test split, initialization, random drawing of some parameter, or overall run with given experimental conditions).
        \item The method for calculating the error bars should be explained (closed form formula, call to a library function, bootstrap, etc.)
        \item The assumptions made should be given (e.g., Normally distributed errors).
        \item It should be clear whether the error bar is the standard deviation or the standard error of the mean.
        \item It is OK to report 1-sigma error bars, but one should state it. The authors should preferably report a 2-sigma error bar than state that they have a 96\% CI, if the hypothesis of Normality of errors is not verified.
        \item For asymmetric distributions, the authors should be careful not to show in tables or figures symmetric error bars that would yield results that are out of range (e.g., negative error rates).
        \item If error bars are reported in tables or plots, the authors should explain in the text how they were calculated and reference the corresponding figures or tables in the text.
    \end{itemize}

\item {\bf Experiments compute resources}
    \item[] Question: For each experiment, does the paper provide sufficient information on the computer resources (type of compute workers, memory, time of execution) needed to reproduce the experiments?
    \item[] Answer: \answerYes{}
    \item[] Justification: Appendix~\ref{app:hardware} reports the production hardware in full: one node with $8\times$ NVIDIA H100 80\,GB HBM3 (640\,GB total, NVLink-interconnected), $2\times$ Intel Xeon Platinum 8558 (96 cores / 192 threads), 2\,TB system RAM, CUDA 12.4 on Ubuntu 24.04. The dominant production cost is high-resolution Cycles ERP rendering at $2048\times1024$ (seconds to minutes per frame); the dataset-analysis script processes the 36{,}373 curator-produced frames in $\sim$13 wall-clock minutes on this hardware, and per-source wall-clock is logged in \texttt{results/coverage\_extended.csv} and \texttt{wallclock.json} in the released artifact.
    \item[] Guidelines:
    \begin{itemize}
        \item The answer \answerNA{} means that the paper does not include experiments.
        \item The paper should indicate the type of compute workers CPU or GPU, internal cluster, or cloud provider, including relevant memory and storage.
        \item The paper should provide the amount of compute required for each of the individual experimental runs as well as estimate the total compute.
        \item The paper should disclose whether the full research project required more compute than the experiments reported in the paper (e.g., preliminary or failed experiments that didn't make it into the paper).
    \end{itemize}

\item {\bf Code of ethics}
    \item[] Question: Does the research conducted in the paper conform, in every respect, with the NeurIPS Code of Ethics \url{https://neurips.cc/public/EthicsGuidelines}?
    \item[] Answer: \answerYes{}
    \item[] Justification: We have reviewed the NeurIPS Code of Ethics and confirm compliance. The work involves no human subjects, no crowdsourcing, and no new collection of personal data. Real-scan sources (HM3D, ScanNet++) are accessed only under their existing EULAs and are \emph{not} redistributed as derived frames; only scene ids, candidate metadata, and adapter regeneration scripts are released, in line with upstream access terms (Appendix~\ref{app:license}). Anonymity is preserved throughout the submission: PDF metadata, code paths, README, and the dataset card are stripped of author identity and local paths.
    \item[] Guidelines:
    \begin{itemize}
        \item The answer \answerNA{} means that the authors have not reviewed the NeurIPS Code of Ethics.
        \item If the authors answer \answerNo, they should explain the special circumstances that require a deviation from the Code of Ethics.
        \item The authors should make sure to preserve anonymity (e.g., if there is a special consideration due to laws or regulations in their jurisdiction).
    \end{itemize}

\item {\bf Broader impacts}
    \item[] Question: Does the paper discuss both potential positive societal impacts and negative societal impacts of the work performed?
    \item[] Answer: \answerYes{}
    \item[] Justification: A dedicated Broader Impact'' paragraph at the end of \S\ref{sec:conclusion}, following the Limitations and Future Work'' paragraph,discusses the positive impact of lowering the engineering cost of producing calibrated panoramic RGB-D resources and of providing an auditable paradigm for combining geometry-aware view selection with multi-source data adapters. It also notes the privacy considerations attached to real-scan sources: even regeneration scripts and viewpoint metadata can reveal where observations would be sampled within a private indoor space, which is why HM3D / ScanNet++ frames are not redistributed and users must comply with upstream access terms.
    \item[] Guidelines:
    \begin{itemize}
        \item The answer \answerNA{} means that there is no societal impact of the work performed.
        \item If the authors answer \answerNA{} or \answerNo, they should explain why their work has no societal impact or why the paper does not address societal impact.
        \item Examples of negative societal impacts include potential malicious or unintended uses (e.g., disinformation, generating fake profiles, surveillance), fairness considerations (e.g., deployment of technologies that could make decisions that unfairly impact specific groups), privacy considerations, and security considerations.
        \item The conference expects that many papers will be foundational research and not tied to particular applications, let alone deployments. However, if there is a direct path to any negative applications, the authors should point it out. For example, it is legitimate to point out that an improvement in the quality of generative models could be used to generate Deepfakes for disinformation. On the other hand, it is not needed to point out that a generic algorithm for optimizing neural networks could enable people to train models that generate Deepfakes faster.
        \item The authors should consider possible harms that could arise when the technology is being used as intended and functioning correctly, harms that could arise when the technology is being used as intended but gives incorrect results, and harms following from (intentional or unintentional) misuse of the technology.
        \item If there are negative societal impacts, the authors could also discuss possible mitigation strategies (e.g., gated release of models, providing defenses in addition to attacks, mechanisms for monitoring misuse, mechanisms to monitor how a system learns from feedback over time, improving the efficiency and accessibility of ML).
    \end{itemize}

\item {\bf Safeguards}
    \item[] Question: Does the paper describe safeguards that have been put in place for responsible release of data or models that have a high risk for misuse (e.g., pre-trained language models, image generators, or scraped datasets)?
    \item[] Answer: \answerNA{}
    \item[] Justification: The released artifact is a curated panoramic RGB-D-pose dataset over indoor / outdoor 3D assets plus a training-free geometric curator. It contains no pretrained generative models, no scraped web content, and no identity-revealing imagery. Real-scan frames (HM3D, ScanNet++) are not redistributed; downstream use of these sources is controlled by their original gated EULAs, which Appendix~\ref{app:license} describes in full. The misuse risk profile is therefore low and dedicated safeguards (e.g., gated download, safety filters) are not warranted by the released material.
    \item[] Guidelines:
    \begin{itemize}
        \item The answer \answerNA{} means that the paper poses no such risks.
        \item Released models that have a high risk for misuse or dual-use should be released with necessary safeguards to allow for controlled use of the model, for example by requiring that users adhere to usage guidelines or restrictions to access the model or implementing safety filters.
        \item Datasets that have been scraped from the Internet could pose safety risks. The authors should describe how they avoided releasing unsafe images.
        \item We recognize that providing effective safeguards is challenging, and many papers do not require this, but we encourage authors to take this into account and make a best faith effort.
    \end{itemize}

\item {\bf Licenses for existing assets}
    \item[] Question: Are the creators or original owners of assets (e.g., code, data, models), used in the paper, properly credited and are the license and terms of use explicitly mentioned and properly respected?
    \item[] Answer: \answerYes{}
    \item[] Justification: Every upstream source we build on is cited in \S\ref{sec:rw} and Appendix~\ref{app:related-work}: Matterport3D, ScanNet, ScanNet++, ARKitScenes, HM3D, Replica, Hypersim, Kubric, Infinigen, 3D-FRONT, Structured3D, Gibson, iGibson, the panoramic NeRF / Gaussian-splatting works, the panoramic generation works, and the NBV / SCVP / active-NeRF literature. The complete license matrix --- source license, CM-EVS release license, and notes for each component (Blender frames, outdoor TartanGround / OB3D frames, HM3D / ScanNet++ scripts and metadata, curator code, documentation) --- is in Appendix~\ref{app:license}. Per-source release status (direct release vs.\ gated regeneration) is in Table~\ref{tab:sources}.
    \item[] Guidelines:
    \begin{itemize}
        \item The answer \answerNA{} means that the paper does not use existing assets.
        \item The authors should cite the original paper that produced the code package or dataset.
        \item The authors should state which version of the asset is used and, if possible, include a URL.
        \item The name of the license (e.g., CC-BY 4.0) should be included for each asset.
        \item For scraped data from a particular source (e.g., website), the copyright and terms of service of that source should be provided.
        \item If assets are released, the license, copyright information, and terms of use in the package should be provided. For popular datasets, \url{paperswithcode.com/datasets} has curated licenses for some datasets. Their licensing guide can help determine the license of a dataset.
        \item For existing datasets that are re-packaged, both the original license and the license of the derived asset (if it has changed) should be provided.
        \item If this information is not available online, the authors are encouraged to reach out to the asset's creators.
    \end{itemize}

\item {\bf New assets}
    \item[] Question: Are new assets introduced in the paper well documented and is the documentation provided alongside the assets?
    \item[] Answer: \answerYes{}
    \item[] Justification: CM-EVS is the new asset and is documented through five complementary artifacts: (i) the Datasheet (Appendix~\ref{app:datasheet}) covering composition, collection, uses, and distribution / licensing; (ii) the Croissant 1.0 manifest (Appendix~\ref{app:croissant}) with explicit RAI fields for personal-sensitive information and known biases; (iii) the per-component license table (Appendix~\ref{app:license}); (iv) the 50-frame audit (Appendix~\ref{app:audit}, Table~\ref{tab:audit}); (v) the dataset-card README, LICENSE, CHANGELOG, and SHA256 manifests that ship with the anonymous Hugging Face release.
    \item[] Guidelines:
    \begin{itemize}
        \item The answer \answerNA{} means that the paper does not release new assets.
        \item Researchers should communicate the details of the dataset\slash code\slash model as part of their submissions via structured templates. This includes details about training, license, limitations, etc.
        \item The paper should discuss whether and how consent was obtained from people whose asset is used.
        \item At submission time, remember to anonymize your assets (if applicable). You can either create an anonymized URL or include an anonymized zip file.
    \end{itemize}

\item {\bf Crowdsourcing and research with human subjects}
    \item[] Question: For crowdsourcing experiments and research with human subjects, does the paper include the full text of instructions given to participants and screenshots, if applicable, as well as details about compensation (if any)?
    \item[] Answer: \answerNA{}
    \item[] Justification: The work does not involve crowdsourcing or human subjects. CM-EVS frames are produced by deterministic geometric rendering or re-encoding from existing 3D assets; no annotators or crowd workers were employed at any stage of dataset construction or evaluation.
    \item[] Guidelines:
    \begin{itemize}
        \item The answer \answerNA{} means that the paper does not involve crowdsourcing nor research with human subjects.
        \item Including this information in the supplemental material is fine, but if the main contribution of the paper involves human subjects, then as much detail as possible should be included in the main paper.
        \item According to the NeurIPS Code of Ethics, workers involved in data collection, curation, or other labor should be paid at least the minimum wage in the country of the data collector.
    \end{itemize}

\item {\bf Institutional review board (IRB) approvals or equivalent for research with human subjects}
    \item[] Question: Does the paper describe potential risks incurred by study participants, whether such risks were disclosed to the subjects, and whether Institutional Review Board (IRB) approvals (or an equivalent approval/review based on the requirements of your country or institution) were obtained?
    \item[] Answer: \answerNA{}
    \item[] Justification: The work does not involve human subjects research; IRB approval (or an equivalent review) is therefore not applicable.
    \item[] Guidelines:
    \begin{itemize}
        \item The answer \answerNA{} means that the paper does not involve crowdsourcing nor research with human subjects.
        \item Depending on the country in which research is conducted, IRB approval (or equivalent) may be required for any human subjects research. If you obtained IRB approval, you should clearly state this in the paper.
        \item We recognize that the procedures for this may vary significantly between institutions and locations, and we expect authors to adhere to the NeurIPS Code of Ethics and the guidelines for their institution.
        \item For initial submissions, do not include any information that would break anonymity (if applicable), such as the institution conducting the review.
    \end{itemize}

\item {\bf Declaration of LLM usage}
    \item[] Question: Does the paper describe the usage of LLMs if it is an important, original, or non-standard component of the core methods in this research? Note that if the LLM is used only for writing, editing, or formatting purposes and does \emph{not} impact the core methodology, scientific rigor, or originality of the research, declaration is not required.
    \item[] Answer: \answerNA{}
    \item[] Justification: The CM-EVS curator (COVER) is a training-free geometric algorithm built on ray-casting, ERP unprojection / reprojection, and conflict-aware greedy submodular selection. No LLM is part of the core method, the dataset construction pipeline, or the evaluation. Any incidental use of LLMs for writing, editing, or formatting falls under the NeurIPS LLM policy clause for which declaration is not required.
    \item[] Guidelines:
    \begin{itemize}
        \item The answer \answerNA{} means that the core method development in this research does not involve LLMs as any important, original, or non-standard components.
        \item Please refer to our LLM policy in the NeurIPS handbook for what should or should not be described.
    \end{itemize}

\end{enumerate}

\end{document}